\tikzset{
    cheating dash/.code args={on #1 off #2}{
        \csname tikz@addoption\endcsname{%
            \pgfgetpath\currentpath%
            \pgfprocessround{\currentpath}{\currentpath}%
            \csname pgf@decorate@parsesoftpath\endcsname{\currentpath}{\currentpath}%
            \pgfmathparse{\csname pgf@decorate@totalpathlength\endcsname-#1}\let\rest=\pgfmathresult%
            \pgfmathparse{#1+#2}\let\onoff=\pgfmathresult%
            \pgfmathparse{max(floor(\rest/\onoff), 1)}\let\nfullonoff=\pgfmathresult%
            \pgfmathparse{max((\rest-\onoff*\nfullonoff)/\nfullonoff+#2, #2)}\let\offexpand=\pgfmathresult%
            \pgfsetdash{{#1}{\offexpand}}{0pt}}%
    }
}
\theoremstyle{break}
\newtheorem{theorem}{Theorem}
\newtheorem{remark}{Remark}
\newtheorem{lemma}{Lemma}
\theoremstyle{definition}
\newcommand{\Matrixinversionlemma}{( \textbf{A}-\textbf{B}\textbf{C}\textbf{D})^{-1} = \textbf{A}^{-1} + \textbf{A}^{-1} \textbf{B} ( \textbf{C}^{-1} - \textbf{D} \textbf{A}^{-1}\textbf{B})^{-1} \textbf{D} \textbf{A}^{-1} }
\newcommand{\T}{\bm{\theta}}
\newcommand{\Tt}{\bm{\theta}_t}
\newcommand{\Tht}{\bm{\hat{\theta}}_t}
\newcommand{\Thb}{\bm{\hat{\theta}}_1}
\newcommand{\Tit}{\bm{\theta}_{i,t}}
\newcommand{\Thit}{\bm{\hat{\theta}}_{i,t}}
\newcommand{\Titp}{\bm{\theta}_{i,t+1}}
\newcommand{\Thitp}{\bm{\hat{\theta}}_{i,t+1}}
\newcommand{\Hit}{\textbf{H}_{i,t}}
\newcommand{\Hjt}{\textbf{H}_{j,t}}
\newcommand{\Pit}{\textbf{P}_{i,t}}
\newcommand{\Pitp}{\textbf{P}_{i,t+1}}
\newcommand{\Piti}{\textbf{P}^{-1}_{i,t}}
\newcommand{\Pitpi}{\textbf{P}^{-1}_{i,t+1}}
\newcommand{\Kit}{\textbf{G}_{i,t}}
\newcommand{\Ht}{\textbf{H}_t}
\newcommand{\Mit}{\textbf{M}_{i,t}}
\newcommand{\Miti}{\textbf{M}^{-1}_{i,t}}
\newcommand{\Mitf}{(\Hit \Pit \Hit^T + r_{i,t} \I )}
\newcommand{\I}{\textbf{I}}
\newcommand{\Qt}{\textbf{Q}_t}
\newcommand{\Xit}{\bm{\zeta}_{i,t}}
\newcommand{\Xt}{\bm{\zeta}_t}
\newcommand{\Xo}{\overline{\zeta}}
\newcommand{\Rt}{\textbf{R}_t}
\newcommand{\zib}{\bm{\chi}_{i,1}}
\newcommand{\dt}{\textbf{d}_t}
\newcommand{\dht}{\bm{\hat{d}}_t}
\newcommand{\et}{\textbf{e}_t}
\newcommand{\normet}{\norm{\textbf{e}_t}}
\newcommand{\loss}{\norm{\dt-\dht}^2}
\newcommand{\zjt}{\bm{\chi}_{j,t}}
\newcommand{\zit}{\bm{\chi}_{i,t}}
\newcommand{\zitp}{\bm{\chi}_{i,t+1}}
\newcommand{\dhjt}{\bm{\hat{d}}_{j,t}}
\newcommand{\htT}{h_t( \{\textbf{x}_t\} ; \T)}
\newcommand{\htTt}{h_t( \{\textbf{x}_t\} ; \Tt)}
\newcommand{\htTht}{h_t( \{\textbf{x}_t\}; \Tht)}
\newcommand{\Delvt}{\Delta V_{i,t}}
\newcommand{\Vb}{\bm{\chi}^T_{i,1} \textbf{P}_{i,1}^{-1} \bm{\chi}^T_{i,1}}
\newcommand{\Tr}{\textrm{Tr}}
\DeclarePairedDelimiter{\norm}{\lVert}{\rVert}%
\begin{document}
\renewcommand{\thepage}{}

\title{An Efficient and Effective Second-Order Training Algorithm For LSTM-based Adaptive Learning}

\author{N. Mert Vural, Salih Erg\"{u}t and Suleyman S. Kozat, \textit{Senior Member, IEEE}
\thanks{This work is supported in part by the Turkish Academy of Sciences Outstanding Researcher Programme and TUBITAK Contract No: 117E153.}
\thanks{ N. M. Vural and S. S. Kozat are with the Department of Electrical and Electronics Engineering, Bilkent University, Ankara 06800, Turkey, e-mail: vural@ee.bilkent.edu.tr, kozat@ee.bilkent.edu.tr.}
\thanks{S. Erg\"{u}t is with the Turkcell Technology, 5G R\&D Team, Istanbul, Turkey, e-mail: salih.ergut@turkcell.com.tr}
\thanks{S. S. Kozat is also with the DataBoss A.S., email: serdar.kozat@data-boss.com.tr.}
}

\maketitle
\begin{abstract}
We study adaptive (or online) nonlinear regression with Long-Short-Term-Memory (LSTM) based networks, i.e., LSTM-based adaptive learning. In this context, we introduce an efficient Extended Kalman filter (EKF) based second-order training algorithm.  Our algorithm  is truly online, i.e., it does not assume any underlying data generating process and future information, except that the target sequence is bounded. Through an extensive set of experiments, we demonstrate significant performance gains achieved by our algorithm with respect to the state-of-the-art methods. Here, we mainly show that our algorithm consistently provides 10 to 45\% improvement in the accuracy compared to the widely-used adaptive methods Adam, RMSprop, and DEKF, and comparable performance to EKF with a 10 to 15 times reduction in the run-time.
\end{abstract}
\begin{IEEEkeywords}
Adaptive learning, online learning, truly online, long short term memory (LSTM), Kalman filtering, regression, stochastic gradient descent (SGD).
\end{IEEEkeywords}

\begin{center} \bfseries EDICS Category: MLR-SLER, MLR-DEEP.  \end{center}

\section{Introduction}
\subsection{Preliminaries}
We investigate adaptive (or online) learning, which has been extensively studied due to its applications in a wide set of problems, such as signal processing~\cite{CBianchi2006, Kivinen04, Kozat07}, neural network training~\cite{King14}, and algorithmic learning theory~\cite{Duchi11}.   In this problem, a learner (or the adaptive learning algorithm) is tasked with predicting the next value of a target sequence based on its knowledge about the previous input-output pairs~\cite{CBianchi2006}. For this task, commonly, nonlinear approaches are employed in the literature since the linear modeling is inadequate for a broad range of applications because of the linearity constraints~\cite{Kivinen04}. 

For adaptive learning, there exists a wide range of nonlinear approaches in the fields of signal processing and machine learning~\cite{Kivinen04, Engel2004}. However, these approaches usually suffer from prohibitively excessive computational requirements, and may provide poor performance due to overfitting and stability issues~\cite{Liu10}. Adopting neural networks is another method for adaptive nonlinear regression due to their success in approximating nonlinear functions. However, neural network-based regression algorithms have been shown to exhibit poor performance in certain applications~\cite{Liu10}. To overcome the limitations of those rather shallow networks, neural networks composed of multiple layers, i.e., deep neural networks (DNNs), have recently been introduced. In DNNs, each layer performs a feature extraction based on the previous layers, enabling them to model highly nonlinear structures~\cite{KriSut12}. However, this layered structure usually performs poorly in capturing the time dependencies, which are commonly encountered in adaptive regression problems~\cite{DeepRec13}. And thus, DNNs provide only limited performance in adaptive learning applications. 

To remedy this issue,  recurrent neural networks (RNNs) are used, as these networks have a feed-back connection that enables them to store past information. However, basic RNNs lack control structures, where the long-term components cause either exponential growth or decay in the norm of gradients~\cite{Bengio94}. Therefore, they are insufficient to capture long-term dependencies, which significantly restricts their performance in real-life applications. In order to resolve this issue, a novel RNN architecture with several control structures, i.e., Long-Short-Term-Memory network (LSTM), was introduced~\cite{lstm}. In this study, we are particularly interested in adaptive nonlinear regression with the LSTM-based networks due to their superior performance in capturing long-term dependencies.

For RNNs (including LSTMs), there exists a wide range of adaptive training methods to learn the network parameters~\cite{King14,Ruder16, lstmdekf}. Among them, first-order gradient-based methods are widely preferred due to their efficiency~\cite{Ruder16}. However,  in general, first-order techniques yield inferior performance compared to second-order techniques~\cite{lstmdekf,Martens16}, especially in applications where network parameters should be rapidly learned, e.g., when data is relatively scarce or highly non-stationary, as in most adaptive signal processing applications~\cite{Haykin}. As a second-order technique, the extended Kalman filter (EKF) learning algorithm has often been favored for its accuracy and convergence speed~\cite{Haykin}. However, the EKF learning algorithm has a quadratic computational requirement in the parameter size, which is usually prohibitive for practical applications due to the high number of parameters in modern networks, such as LSTMs~\cite{lstm}. 

To reduce the time complexity of EKF, it is common practice to approximate the state covariance matrix in a block-diagonal form by neglecting the covariance terms between the weights belonging to different nodes~\cite{Shah92}. We refer to this method as the Independent EKF (IEKF) method  since each neural node in LSTM in this case is assumed as an independent subsystem. By using IEKF, the computational requirement of EKF is reduced by the number of neural nodes in the network. Note that this computational saving  is especially beneficial to second-order LSTM optimization, since LSTMs contain a large number of nodes in practice -- usually more than $50$~\cite{lstmdekf, Coskun17, Tolga18, Yang17}. On the other hand, IEKF generally performs poorly compared to EKF since it ignores the correlation between most of the network weights~\cite{Haykin}.

In this study, we introduce an \emph{efficient and effective} second-order training algorithm that reduces the performance difference between EKF and IEKF. To develop our algorithm, we use the online learning approach~\cite{Zinkevich03}, i.e., we build a procedure that is guaranteed to predict the time series with performance close to that of the best predictor in a given set of models, without any assumption on the distribution on the observed time series. Therefore, our algorithm is highly practical in the sense that it does not assume any underlying data generating process or future information, except that the target sequence is bounded. Through an extensive set of experiments, we demonstrate significant performance gains achieved by our algorithm with respect to the state-of-the-art methods. Here, we mainly show that our algorithm provides $10$ to $45\%$ improvement in the accuracy of the widely-used adaptive methods Adam~\cite{King14}, RMSprop~\cite{Tieleman12}, and DEKF~\cite{Pusk94}, and comparable performance to EKF~\cite{Haykin} in $10$ to $15$ times smaller training time.

\subsection{Prior Art and Comparison}
In the neural network literature, Kalman filtering is commonly used to refer to inferring the latent (or hidden) variables from sequential observation~\cite{Haykin}. This task is especially relevant for the purposes of generative sequential learning~\cite{Krishnan15,Krishnan17,Gao16,Maddison17} and adaptive RNN (including LSTM) training~\cite{lstmdekf,Haykin, Tolga18}. In both generative sequential learning and adaptive RNN training, sequential data are usually assumed to be noisy observations of a nonlinear dynamical system, commonly modeled as RNNs. The main difference in those fields is their latent variable selections (see \cite[Section 4]{Krishnan15} and \cite[Section 3B]{Tolga18}). In the generative sequential learning studies, the hidden state vector of the model is often chosen as the latent variable~\cite{Krishnan15,Krishnan17,Gao16,Maddison17}. Since the likelihood of the observations become intractable in this case, variational methods (combined with either sampling-based~\cite{Maddison17} or first-order gradient methods~\cite{Krishnan15,Krishnan17,Gao16})  are used to learn the network weights.  On the other hand, in the adaptive RNN training studies, the network weights are chosen as the latent variable, and variants of EKF are used to find the locally optimum weights adaptively (or to train the RNN model online)~\cite{lstmdekf,Haykin, Tolga18}. We note that our study is an example of the latter.

Various optimization algorithms in the deep learning literature, such as Adam~\cite{King14}, RMSprop~\cite{Tieleman12} or SGD~\cite{Ruder16}, can be used to train LSTMs online. Among the widely used first-order algorithms,  the adaptive learning methods, e.g., Adam and RMSprop, provide faster convergence than the plain SGD~\cite{Sutskever13}.  The performance gains of Adam and RMSprop are justified with their approximate Hessian-based preconditioning, which let them use the second-order properties of the error surface to improve the convergence speed~\cite{Dauphin15}. However, the approximation employed in the first-order adaptive methods considers only the diagonal elements of the Hessian matrix, which severely limits their performance in comparison to the second-order algorithms that utilize a full Hessian matrix estimate~\cite{Martens16}.

For LSTMs, the Hessian-Free and EKF algorithms are capable of utilizing a full Hessian matrix estimate with reasonable time complexity~\cite{Martens10,Haykin}. However, the Hessian-Free algorithm requires a large size of batches to approximate the Hessian matrix, which makes it impractical for online processing. On the other hand, the EKF learning algorithm is highly appropriate for adaptive learning, since it recursively updates its Hessian estimate, i.e., its state covariance matrix, without using mini-batch statistics. Moreover, EKF is extensively studied in the neural network literature, where it has been repeatedly demonstrated to have faster convergence and better accuracy than the first-order methods~\cite{lstmdekf}. However, EKF requires a quadratic computational complexity in the number of parameters, which is prohibitive for most of the practical applications using LSTMs.

As noted earlier, it is common in practice to use a block-diagonal approximation for the state covariance matrix to reduce the computational complexity of EKF, which we referred to as the IEKF approach. The algorithms using the IEKF approach, such as the Decoupled EKF (DEKF)~\cite{Pusk94} or  Multiple Extended Kalman Algorithm (MEKA)~\cite{Shah92}, have been shown to provide better performance than the first-order methods with an acceptable reduction in the error performance (due to their computational savings) compared to EKF~\cite{Shah92}. Despite their empirical success, the existing methods are heuristic-based; hence, they have no guarantee to converge to an optimum set of weights during the training~\cite{Pusk94}. Moreover, due to the neglected covariance terms, they are sensitive to the hyperparameter selection and initialization, which degrades their performance compared to EKF~\cite{Haykin}.  

In this study, we are interested in reducing the performance difference between EKF and IEKF by providing an efficient and effective second-order training algorithm for LSTM-based adaptive learning. To this end, we introduce an IEKF-based algorithm with an adaptive hyperparameter selection strategy that guarantees errors to converge to a small interval.

There are only a few recent works studying convergence analysis of the RNN training with EKF~\cite{Rubio07,Wang11} and all these studies consider basic RNN training with full Hessian approximation, which cannot be easily extended to LSTMs due to the high computational complexity of EKF. Hence, unlike the previous works, we use the IEKF approach and introduce an IEKF-based training algorithm with a performance guarantee. To the best of our knowledge, our paper is the first study to provide a theoretically justified second-order algorithm for LSTM-based adaptive learning. Endowed with a theoretical guarantee, our algorithm performs very closely to EKF in a considerably shorter running time. As noted earlier, our algorithm does not require a priori knowledge of the environment for its performance guarantee (i.e., truly online), which makes it practical for a wide range of adaptive signal processing applications, such as time-series prediction~\cite{lstmdekf} and object tracking~\cite{Coskun17}.

\subsection{Contributions}
Our main contributions are as follows: 

\begin{enumerate}
\item To the best of our knowledge, we, as the first time for the literature, introduce a second-order training algorithm with a performance guarantee for LSTM-based adaptive learning algorithm.  
\item Since we construct our algorithm with the IEKF approach, our algorithm provides significant computational savings in comparison to the state-of-the-art second-order optimization methods~\cite{Martens16, Haykin}.
\item Our algorithm can be used in a broad range of adaptive signal processing applications, as it does not assume any underlying data generating process or future information, except that the target sequence is bounded~\cite{lstmdekf, Coskun17}.
\item Through an extensive set of experiments involving synthetic and real data, we demonstrate significant performance gains achieved by the proposed algorithm with respect to the state-of-the-art algorithms.
\item In the experiments, we particularly show that our algorithm provides a $10$ to $45\%$ reduction in the mean squared error performance compared to the widely-used adaptive learning method, i.e., Adam~\cite{King14}, RMSprop~\cite{Tieleman12}, and DEKF~\cite{Pusk94}, and comparable performance to EKF~\cite{Haykin} with a $10$ to $15$ times reduction in the run-time.
\end{enumerate}

\subsection{Organization of the Paper}
This paper is organized as follows. In Section~\ref{sec:model}, we formally introduce the adaptive regression problem and describe our LSTM model. In Section~\ref{sec:review}, we demonstrate the EKF and IEKF learning algorithms, where we also compare their computational requirements to motivate the reader for the analysis in the following section. In Section~\ref{sec:main}, we  develop  a truly online IEKF-based LSTM training algorithm with a performance guarantee. In Section~\ref{sec:exp}, we demonstrate the performance of our algorithm via an extensive set of experiments. We then finalize our paper with concluding remarks in Section~\ref{sec:concl}.

\section{Model and Problem Description}\label{sec:model}
\begin{figure}[t]
\centering
  \includegraphics[width=0.45\textwidth]{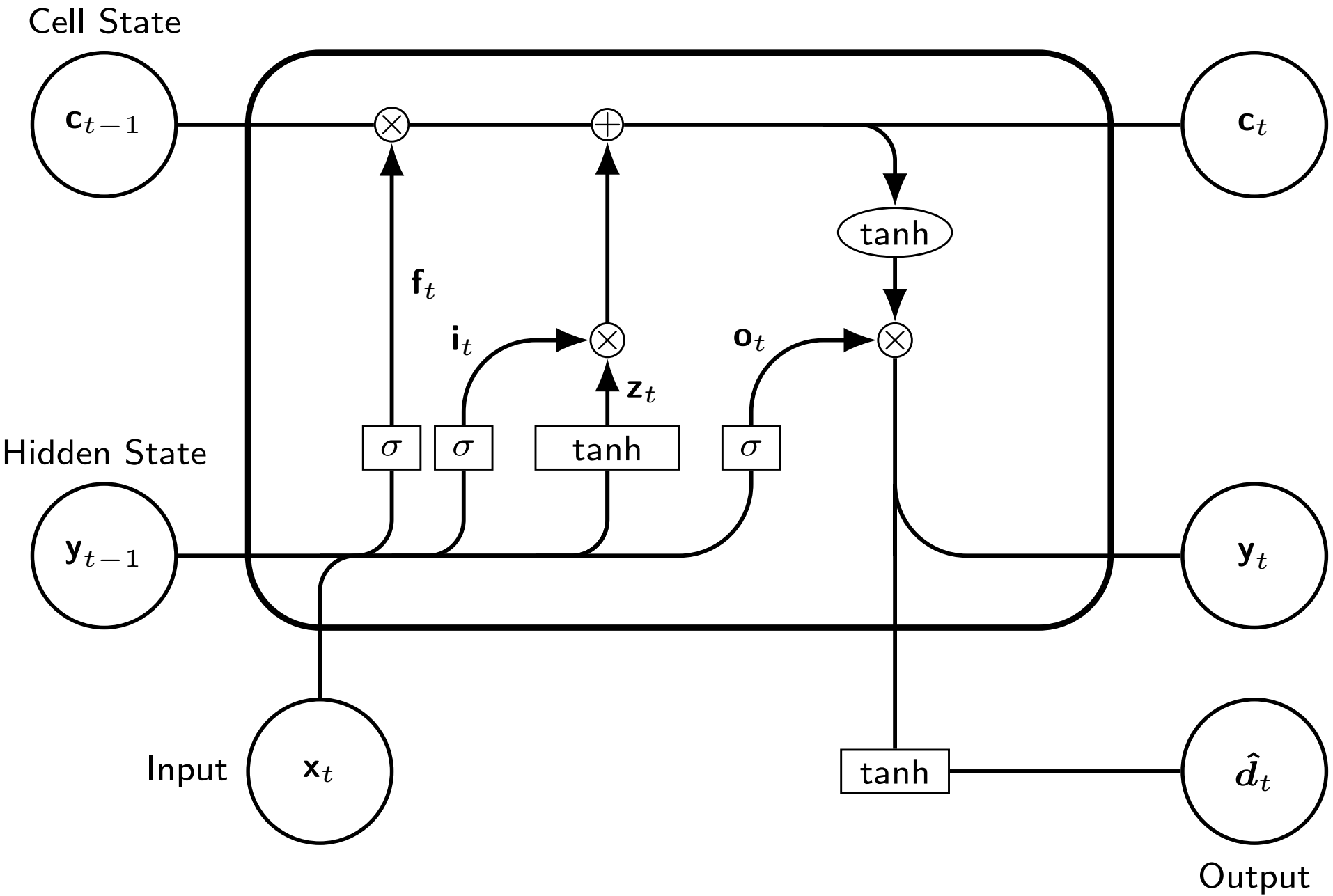}\\
 \caption{The detailed schematic of the equations given in (\ref{lstm_1})-(\ref{lstm_7}).}
 \label{fig:lstm}
\end{figure}
All vectors are column vectors and denoted by boldface lower case letters. Matrices are represented by boldface capital letters.  The $\textbf{0}$ (respectively $\textbf{1}$) represents a matrix or a vector of all zeros (respectively ones), whose dimensions are understood from on the context. $\textbf{I}$ is the identity matrix, whose dimensions are understood from the context. $\norm{\cdot}$ and $\Tr(\cdot)$ denote the Euclidean norm and trace operators. Given two matrices $\textbf{A}$ and $\textbf{B}$, $\textbf{A} > \textbf{B}$ (respectively $\geq$) means that $(\textbf{A}-\textbf{B})$ is a positive definite (respectively semi-positive definite) matrix. Given two vectors $\textbf{x}$ and $\textbf{y}$, $[\textbf{x} ; \textbf{y}]$ is their vertical concatenation. We use bracket notation $[n]$ to denote the set of the first $n$
positive integers, i.e., $[n]= \{ 1, \cdots, n \}$.

We define the adaptive regression problem as follows: We sequentially receive bounded target vectors, $\{\dt\}_{ t \geq 1}$, $\dt \in  [-1,1]^{n_d}$, and input vectors, $\{\textbf{x}_t\}_{t \geq 1}$, $\textbf{x}_t \in \mathbbm{R}^{n_x}$ such that our goal is to estimate $\dt$ based on our current and past observations $\{\cdots, \textbf{x}_{t-1}, \textbf{x}_t\}$.\footnote{We assume $\dt \in  [-1,1]^{n_d}$ for notational simplicity; however, our derivations hold for any bounded desired data sequence after shifting and scaling in magnitude.} Given our estimate $\dht$, which can only be a function of $\{\cdots, \textbf{x}_{t-1}, \textbf{x}_t\}$ and $\{\cdots, \textbf{d}_{t-2}, \textbf{d}_{t-1}\}$, we suffer a loss $\ell(\dt,\dht)$. The aim is to optimize the network with respect to the loss function $\ell(\cdot,\cdot)$. In this study, we particularly work with the squared error, i.e., $\ell(\dt,\dht)= \loss$. We note that since we observe the target value $\dt$ at each time step, i.e., full information setting,  all of our results hold \emph{in the deterministic sense}. We additionally note that our work can be extended to a wide range of cost functions (including the cross-entropy) using the analysis in~\cite[Section 3]{Pusk97}.  

In this paper, we study adaptive regression with LSTM-based networks due to their succes in modelling highly nonlinear sequential tasks~\cite{lstm}. As illustrated in Fig.~\ref{fig:lstm}, we use the most widely used LSTM model, where the activation functions are set to the hyperbolic  tangent function and the peep-hole connections are eliminated. As in Fig.~\ref{fig:lstm}, we use a single hidden layer based on the LSTM structure,  and an output layer with the hyperbolic tangent function\footnote{We use the hyperbolic tangent function in the output layer to ensure that the model estimates also lay in $[-1,1]^{n_d}$. However, our results hold for any output activation function, given that the function is differentiable, and its range is sufficient large to include the target vectors.}. Hence, we have:
\begin{align}
\textbf{z}_t&=\tanh(\textbf{W}^{(z)}[\textbf{x}_t ; \textbf{y}_{t-1} ]) \label{lstm_1}\\
\textbf{i}_t&= \sigma(\textbf{W}^{(i)}[\textbf{x}_t ; \textbf{y}_{t-1} ]) \\
\textbf{f}_t&= \sigma(\textbf{W}^{(f)}[\textbf{x}_t ; \textbf{y}_{t-1} ]) \\
\textbf{c}_t&= \textbf{i}_t \odot \textbf{z}_t + \textbf{f}_t \odot \textbf{c}_{t-1} \label{lstm:c} \\
\textbf{o}_t&= \sigma(\textbf{W}^{(o)}[\textbf{x}_t ; \textbf{y}_{t-1} ]) \label{lstm:5} \\
\textbf{y}_t&= \textbf{o}_t \odot \tanh(\textbf{c}_t) \label{lstm_6} \\
\dht&= \tanh(\textbf{W}^{(d)}  \textbf{y}_t) \label{lstm_7} 
\end{align}
where $\odot$ denotes the element-wise multiplication, $\textbf{c}_t \in \mathbbm{R}^{n_s}$ is the state vector, $\textbf{x}_t \in \mathbbm{R}^{n_x}$ is the input vector, and $\textbf{y}_t \in \mathbbm{R}^{n_s}$ is the output vector, and $\dht \in [-1,1]^{n_d}$ is our final estimation. Furthermore, $\textbf{i}_t$, $\textbf{f}_t$ and $\textbf{o}_t$ are the input, forget and output gates respectively. The sigmoid function $\sigma(.)$ and the hyperbolic tangent function $\tanh(.)$ applies point wise to the vector elements. The weight matrices are $\textbf{W}^{(z)}, \textbf{W}^{(i)}, \textbf{W}^{(f)}, \textbf{W}^{(o)} \in \mathbbm{R}^{n_s \times (n_x + n_s )}$ and $\textbf{W}^{(d)} \in \mathbbm{R}^{ n_d \times n_s}$. We note that although we do not explicitly write the bias terms, they can be included in (\ref{lstm_1})-(\ref{lstm_7}) by augmenting the input vector with a constant dimension.

\section{EKF-Based Adaptive Training Algorithms}\label{sec:review}
In this section, we introduce the EKF and IEKF learning algorithms within the LSTM-based adaptive learning framework. We note that in the neural network literature, there are two approaches to derive EKF-based learning algorithms: the parallel EKF approach, where both the network weights and hidden state vectors are treated as the states to be estimated by EKF, and the parameter-based EKF approach, where only the network weights are viewed as states to be estimated~\cite{Pusk94}. In the following, we derive our algorithms by using \emph{the parameter-based EKF approach}. We prefer to use the parameter-based EKF approach since, unlike the parallel EKF approach, the parameter-based EKF approach allows us to use the Truncated Backpropagation Through Time algorithm~\cite{Will95} to approximate the derivatives efficiently~\cite{Pusk94}. However, we emphasize that it is possible to adapt our analysis to the parallel EKF approach by using the state-space representation in \cite{Tolga18} and changing our mathematical derivations accordingly.

For notational convenience in the following derivations, we introduce two new notations: \textit{1)} We group all the LSTM parameters, i.e., $\textbf{W}^{(z)}, \textbf{W}^{(i)}, \textbf{W}^{(f)}, \textbf{W}^{(o)} \in \mathbbm{R}^{n_s \times (n_x + n_s )}$ and $\textbf{W}^{(d)} \in \mathbbm{R}^{n_d \times n_s}$, into a vector $\bm{\theta} \in \mathbbm{R}^{n_{\theta}}$, where $n_{\theta}= 4n_s(n_s+n_x) + n_s n_d$. \textit{2)} We use $\{\textbf{x}_t\}$ to denote the input sequence up to time $t$, i.e.,  $\{\textbf{x}_t\}=\{\textbf{x}_1,\textbf{x}_2,\cdots, \textbf{x}_t\}$.

Now, we are ready to derive the EKF and IEKF learning algorithms. 
 
\subsection{Adaptive Learning with EKF}
\begin{figure*}[t!]
\hspace{10mm}
\begin{tikzpicture}[
    font=\sf \scriptsize,
    >=LaTeX,
    cell/.style={
        rectangle, 
        rounded corners=1mm, 
        draw,
        very thick,
        },
    invcell/.style={
        rectangle, 
        rounded corners=1mm, 
        draw=none,
        },    
    operator/.style={
        circle,
        draw,
        inner sep=-0.5pt,
        minimum height =.2cm,
        },
    function/.style={
        ellipse,
        draw,
        inner sep=1pt
        },
    ct/.style={
        },
    gt/.style={
        rectangle,
        draw,
        thick,
        minimum width=4mm,
        minimum height=3mm,
        inner sep=1pt
        },
    mylabel/.style={
        font=\scriptsize\sffamily
        },
    ArrowC1/.style={
        rounded corners=.25cm,
        thick,
        },
    ArrowC2/.style={
        rounded corners=.5cm,
        thick,
        },
    ]
	
    \node [cell, minimum height =1.75cm, minimum width=2.5cm, align=center] (cell1) at (-7,0){LSTM cell \\ parametrized by $\Tt$ \\ \tiny{(The equations are} \\ \tiny{given in (1)-(6).)} } ;
    \node [cell, minimum height =1.75cm, minimum width=2.5cm, align=center] (cell2) at (-3.75,0){LSTM cell \\ parametrized by $\Tt$} ;
    \node [invcell, minimum height =1.75cm, minimum width=2.5cm, align=center] (cell3) at (-0.5,0){\Large{$\cdots$}} ;
    \node [cell, minimum height =1.75cm, minimum width=2.5cm, align=center] (cell4) at (2.75,0){LSTM cell \\ parametrized by $\Tt$} ;
    
    \node[ct] (x1) at (-7,-1.5) {$\textbf{x}_{1}$};
    \node[ct] (x2) at (-3.75,-1.5) {$\textbf{x}_{2}$};
    \node[ct] (xt) at (2.75,-1.5) {$\textbf{x}_{t}$};
    
    \node[ct] (c0) at (-9,0.3) {$\textbf{c}_{0}$};
    \node[ct] (y0) at (-9,-0.3) {$\textbf{y}_{0}$};

    \node [gt, minimum width=1 cm] (out) at (4.375,1.3) {tanh};
    
    \node[ct] (dt) at (4.375,2) {$\textbf{d}_{t}$};

	\draw[decoration={brace,mirror,raise=5pt},decorate,thick] (-7.15, -1.2) -- node[left=5pt,align=center] {Input \\ Vectors} (-7.15,- 1.7);  	
 	\draw[decoration={brace,mirror,raise=5pt},decorate,thick]  (-9.05,0.4) -- node[left=5.05pt,align=center] {Initial \\ States} (-9.05,-0.4); 
	\draw[decoration={brace,mirror,raise=5pt},decorate,thick] (-8.25,-1.65) -- node[below=8.5pt,align=center] {Unfolded version of the LSTM model in (\ref{lstm_1})-(\ref{lstm_7}) \\ over all the time steps up to the current time step $t$. \\ Note that all forward passes are parametrized by $\Tt$.} (4.375,-1.65); 
    \draw[decoration={brace,raise=5pt},decorate,thick] (4.75,1.65) --  node[right=5.1pt,align=center] {Output Layer \\ parametrized \\ by $\Tt$ \\ \tiny{(The equation is} \\ \tiny{given in (7).)} } (4.75, 0.9); 
    \draw[decoration={brace,mirror,raise=5pt},decorate,thick] (4.22,2.2) -- node[left=5.1pt,align=center] {Desired \\ data} (4.22,1.8);

	\draw [->, thick] (x1.north) -| (cell1.south);
	\draw [->, thick] (x2.north) -| (cell2.south);
	\draw [->, thick] (xt.north) -| (cell4.south);
 
 	\draw [->, thick] (c0.east) -- (c0-|cell1.west);
	\draw [->, thick] (y0.east) -- (y0-|cell1.west);
	
	\draw [->, thick] (c0-|cell1.east) -- node[above] {$\textbf{c}_1$} (c0-|cell2.west);
	\draw [->, thick] (y0-|cell1.east) -- node[below] {$\textbf{y}_1$}  (y0-|cell2.west);
	
	\draw [->, thick] (c0-|cell2.east) -- node[above] {$\textbf{c}_2$} (c0-|cell3.west);
	\draw [->, thick] (y0-|cell2.east) -- node[below] {$\textbf{y}_2$} (y0-|cell3.west);
	
	\draw [->, thick] (c0-|cell3.east) -- node[above] {$\textbf{c}_{t-1}$} (c0-|cell4.west);
	\draw [->, thick] (y0-|cell3.east) -- node[below] {$\textbf{y}_{t-1}$} (y0-|cell4.west);
	
	\draw [->, thick] (y0-|cell4.east) -| node[right] {$\textbf{y}_{t}$} (out.south);
	\draw [->, thick] (out.north) -- (dt.south);

\end{tikzpicture}  \\
 \caption{The detailed schematic of $\htTt$ given in (\ref{lstm_ss1}). The figure represents the unfolded version of the LSTM model in (\ref{lstm_1})-(\ref{lstm_7}) over all the time steps up to the current time step $t$. Here, the iterations start with the predetermined initial states $\textbf{y}_0$ and $\textbf{c}_0$, which are independent of the network weights. Then, the same LSTM forward pass (given in (\ref{lstm_1})-(\ref{lstm_6})) is repeatedly applied to the input sequence $\{\textbf{x}_t\}$, where the LSTM cells are parametrized by their corresponding weights in $\Tt$. Finally, the resulting hidden state vector $\textbf{y}_t$ goes through the output layer function in (\ref{lstm_7}), which is parametrized by its corresponding weights in $\Tt$, and generates the desired data $\dt$. We note that by the given $\htTt$, the data generating process in (\ref{lstm_ss})-(\ref{lstm_ss1}) is fully characterized by only the network weights $\Tt$ as the parameter-based EKF approach requires.}
 \label{fig:lstm_it}
\end{figure*}
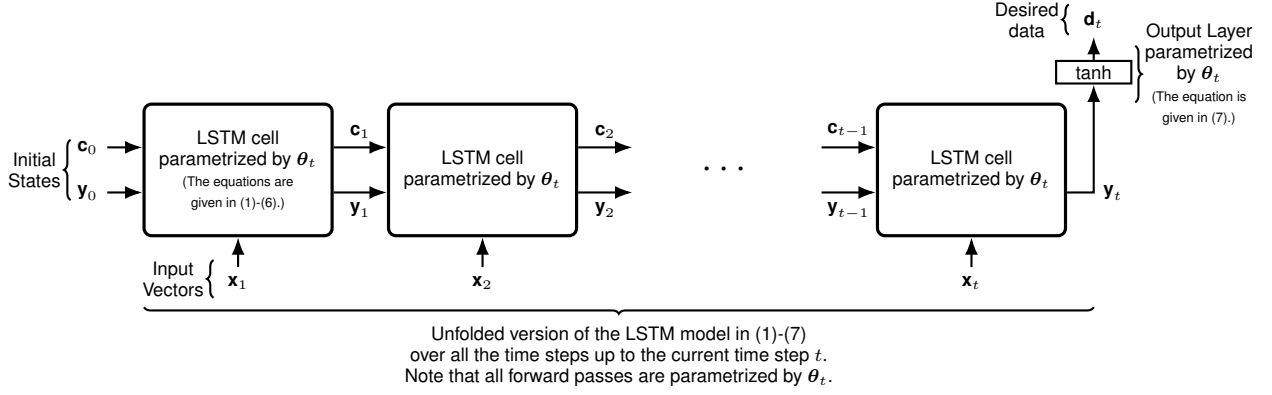

In order to convert the LSTM training into a state estimation problem, we model the desired signal as a nonlinear process realized by the LSTM network in (\ref{lstm_1})-(\ref{lstm_7}).  We note that since we use the parameter-based EKF approach, our desired signal model should be fully characterized by only the network weights $\bm{\theta}$. Therefore, we describe the underlying process of the incoming data with the following dynamical system:
\begin{align}
\Tt &= \bm{\theta}_{t-1}  \label{lstm_ss}\\
\dt&= \htTt. \label{lstm_ss1}
\end{align}
Here, we represent the LSTM weights that realize the incoming data stream with a vector $\Tt \in \mathbbm{R}^{n_{\theta}}$, which is modeled as a stationary process. As detailed in Fig. \ref{fig:lstm_it}, we use $\htTt$ to represent the unfolded version of the LSTM model in (\ref{lstm_1})-(\ref{lstm_7}) over all the time steps up to the current time step $t$, where all forward passes are parametrized by $\Tt$. Note that the dependence of $h_t(\cdot)$ on $t$ is due to the increased length of the recursion at each time step. The EKF learning algorithm is the EKF applied to the state-space model in (\ref{lstm_ss})-(\ref{lstm_ss1}) to estimate the network parameters $\bm{\theta}_t$. From the optimization perspective, EKF performs an online optimization procedure with the squared loss, aiming to predict the time-series with performance close to the best state-space model formulated as (\ref{lstm_ss})-(\ref{lstm_ss1}), i.e., the LSTM model with the (locally) optimum parameters~\cite{Tolga18}.

In the algorithm, we first perform  the forward LSTM-propagation with (\ref{lstm_1})-(\ref{lstm_7}) by using the parameters $\bm{\hat{\theta}}_t \in \mathbbm{R}^{n_{\theta}}$, which is our estimate for the optimum weights at time step $t$. Then, we perform the weight updates with the following formulas:
\begin{align}
&\bm{\hat{\theta}}_{t+1}= \bm{\hat{\theta}}_t + \textbf{G}_t ( \dt - \dht ) \label{sekf:1} \\
&\textbf{P}_{t+1} = (\textbf{I}-\textbf{G}_t\textbf{H}_t)\textbf{P}_t + \textbf{Q}_t  \label{cov_upd} \\
&\textbf{H}_t= \frac{\partial  \htT }{\partial \bm{\theta}}\Bigr|_{\bm{\theta}=\bm{\hat{\theta}}_t} \label{jacob} \\
&\textbf{G}_t=\textbf{P}_t\textbf{H}_t^T (\textbf{H}_t \textbf{P}_t \textbf{H}_t^T + \textbf{R}_t)^{-1}. \label{sekf:f} 
\end{align}
Here, $\textbf{P}_t \in \mathbbm{R}^{n_{\theta}\times n_{\theta}}$ is the state covariance matrix, which models the interactions between each pair of the LSTM parameters, $\textbf{G}_t \in \mathbbm{R}^{n_{\theta} \times n_d}$ is the Kalman gain matrix, and $\textbf{H}_t \in \mathbbm{R}^{n_d \times n_{\theta}}$ is the Jacobian matrix of $\htT$ evaluated at $\bm{\hat{\theta}}_t$. The noise covariance matrices $\Qt  \in \mathbbm{R}^{n_\theta \times n_\theta}$ and $\Rt \in \mathbbm{R}^{n_d \times n_d}$ are artifically introduced to the algorithm to enhance the training performance~\cite{Haykin}. In order to efficiently implement the algorithm, we use diagonal matrices for the artificial noise terms, i.e., $\Qt = q_t \I$ and $\Rt= r_t \I$, where $q_t, r_t > 0$. Due to (\ref{cov_upd}) and (\ref{sekf:f}), the computational complexity of the EKF learning algorithm is $O(n^2_{\theta})$, which is usually  prohibitive for the online settings.\footnote{We use big-$O$ notation, i.e., $O(f(x))$, to ignore constant factors.} 

\subsection{Adaptive Learning with IEKF}
In this study, we use IEKF to develop an efficient EKF-based training algorithm for LSTM-based adaptive learning. Recall that in IEKF, we approximate to the state covariance matrix by using a block-diagonal matrix approximation. To this end, here, we assume each neural node in LSTM as an independent subsystem, and use a different (and independent) EKF learning algorithm to learn the weight in each node. Let us denote the LSTM nodes with the first $(4 n_s + n_d)$ integers, i.e., $[(4n_s+ n_d)]=\{ 1 , \cdots, (4 n_s+ n_d) \}$, and use $i$ to index the nodes, i.e., $i \in [(4n_s+ n_d)]$. Then, we perform the weight updates in IEKF with the following:
\begin{align}
&\textrm{for } i= 1, \cdots, (4 n_s+ n_d) \nonumber \\
&\bm{\hat{\theta}}_{i,t+1}= \bm{\hat{\theta}}_{i,t} + \textbf{G}_{i,t} ( \textbf{d}_t - \bm{\hat{d}}_t  )  \label{iekf_1}\\
&\textbf{P}_{i,t+1} = (\textbf{I}-\textbf{G}_{i,t} \textbf{H}_{i,t})\textbf{P}_{i,t} + q_t \I \label{ind:cov_upd} \\
&\textbf{H}_{i,t}= \frac{\partial  \htT }{\partial \bm{\theta}_i}\Bigr|_{\bm{\theta}_i=\bm{\hat{\theta}}_{i,t}} \label{iekf_l} \\
&\textbf{G}_{i,t}=\textbf{P}_{i,t} \textbf{H}_{i,t}^T (\textbf{H}_{i,t} \textbf{P}_{i,t} \textbf{H}_{i,t}^T + r_{i,t} \I)^{-1}. \label{k_gain}
\end{align}
Here, $\Tit \in \mathbbm{R}^{\frac{n_\theta}{(4 n_s+ n_d)}}$ and $\Thit \in \mathbbm{R}^{\frac{n_\theta}{(4 n_s+ n_d)}}$ denote the optimal weights and their estimate in the LSTM node with index $i$. $\textbf{P}_{i,t} \in \mathbbm{R}^{\frac{n_\theta}{(4 n_s+ n_d)} \times \frac{n_\theta}{(4 n_s+ n_d)}}$ is the state covariance matrix, $\textbf{H}_{i,t} \in \mathbbm{R}^{n_d \times \frac{n_\theta}{(4 n_s+ n_d)}}$ is the Jacobian matrix of $\htTt$ and $\textbf{G}_{i,t} \in \mathbbm{R}^{\frac{n_\theta}{(4 n_s+ n_d)} \times n_d}$ is the Kalman gain matrix corresponding to the LSTM weights in node $i$. Since we perform (\ref{ind:cov_upd}) and  (\ref{k_gain}) $(4n_s+n_d)$ times, the computational complexity of the IEKF learning algorithm is $O(n^2_\theta/ (4 n_s+ n_d))$. We note that since LSTMs include a large number of nodes in practice, the reduction in the computational requirement with IEKF leads to considerable run-time savings in LSTM-based adaptive learning. However, as noted earlier,  since IEKF ignores the correlation between most of the network weights, it generally performs poorly compared to EKF~\cite{Haykin,Pusk94}.

In this study, we aim to introduce an efficient and effective second-order training algorithm by devising an algorithm that reduces the performance difference between EKF and IEKF.  To this end, differing from the existing studies~\cite{Pusk94,Shah92}, we introduce an IEKF-based LSTM training algorithm with a theoretical performance guarantee. Our simulations suggest that our theoretical guarantee provides a significant reduction in the performance difference between EKF and IEKF, hence, offers a desirable trade-off between speed and accuracy for second-order LSTM training. In the following sections, we present the development of our algorithm and show its performance gains in comparison to the state-of-the-art algorithms.

\section{Algorithm Development} \label{sec:main}
In this section, we introduce an adaptive IEKF-based training algorithm with a performance guarantee. We present our algorithm in two subsections: In the first subsection, we develop our algorithm by assuming that we have prior information about the data (in the form of the non-linear term in the error dynamics) before the training. In the second subsection, we drop this assumption and extend our algorithm to a truly online form, where the final algorithm sequentially learns the non-linear term without a priori knowledge of the data while preserving its performance guarantee.

For the analysis in the following subsections, we write the error dynamics of the independent EKF structures. To this end, we first write the Taylor series expansion of $\htTt$ around $\Tht$:
\begin{equation}
\label{eq:taylor1}
\htTt = \htTht + \Ht (\Tt-\Tht) +   \Xt,
\end{equation} 
where $\Ht \in \mathbbm{R}^{n_d \times n_\theta}$ is the Jacobian matrix of $\htT$ evaluated at $\Tht$, and $\Xt \in \mathbbm{R}^{n_d}$ is the non-linear term in the expansion. Note that $\dt = \htTt$ and $\dht = \htTht$.  For notational simplicity, we introduce two shorthand notations: $\zit= (\Tit-\Thit)$, and $\et= (\dt - \dht)$. Then, the error dynamics of the EKF learning algorithm applied to node $i$ can be written as:
\begin{align}
\et &= \sum_{i=1}^{(4 n_s + n_d)} \Hit \zit + \Xt  =  \Hit \zit + \Xit, \label{eq:taylor2}
\end{align}
where we consider the effect of partitioning the weights  as additional non-linearity for node $i$, i.e.,
\begin{equation}
\Xit=  \sum_{\substack{j=1 \\ j \neq i}}^{(4 n_s + n_d)} \Hjt \zjt + \Xt.
\end{equation}
For now, let us assume that the norm of $\Xit$ is bounded by a scalar value $\Xo$ for all the nodes throughout the training, i.e.,
\begin{equation}
\label{eq:xo}
\Xo \geq \norm{\Xit} \quad  \textrm{for all } t \in [T]\textrm{ and } i \in [(4 n_s + n_d)]. 
\end{equation} 
We note that we will prove (\ref{eq:xo}) in the following part (in Theorem \ref{th:bound}). 
 
\subsection{Performance Guarantee with Known Parameters}\label{sec:math}
\begin{algorithm}[t!]
\algsetup{linenosize=\normalsize}
	\caption{}\label{alg:alg1}
	\begin{algorithmic}[1]
	    \STATE \textbf{Parameters:} $\Xo \in \mathbbm{R}^+$.
		\STATE \textbf{Initialization:} $\textbf{P}_{i,1} = p_1 \I$ for $i \in [(4n_s+n_d)]$, where $p_1 \in \mathbbm{R}^+$.	  
		\FOR{$t=1$ \TO $T$}
		\STATE Generate $\dht$, observe $\dt$, and suffer a loss $\normet^2=\loss$.
		\IF{$\normet^2 > 4 \Xo^2$}
		\FOR{$i=1$ \TO $(4 n_s + n_d)$}
		\STATE Calculate the Jacobian matrix $\Hit$  \label{alg1:ht}
		\STATE $\displaystyle r_{i,t} = 3 \: \Tr( \Hit \Pit \Hit^T )/ n_d$  \label{alg1:rt}
		\STATE $\Kit=\Pit \Hit^T (\Hit \Pit \Hit^T + r_{i,t} \I)^{-1}$ \label{alg1:kt}
		\STATE $\Thitp=\Thit + \Kit (\dt-\dht)$ \label{alg1:up}
		\STATE $\Pitp= (\I - \Kit \Hit) \Pit + q_t \I$ \label{alg1:ptpp}
		\ENDFOR
		\ELSE
		\STATE $\Thitp=\Thit$, for $i \in [(4 n_s + n_d)]$ \label{alg1:unch1}
		\STATE $\Pitp = \Pit$, for $i \in [(4 n_s + n_d)]$  \label{alg1:unch2}
		\ENDIF
		\ENDFOR
	\end{algorithmic}
\end{algorithm}
In this subsection, we present an IEKF-based algorithm that guarantees errors to converge to a small interval under the assumption that $\Xo$ is known, i.e., Algorithm \ref{alg:alg1}. 

In Algorithm \ref{alg:alg1}, we take  the upper bound of the residual terms $\Xo$  as the input. We initialize the state covariance matrix of each independent EKF as $\textbf{P}_{i,1} = p_1 \I$, where $p_1 \in \mathbbm{R}^+$. In each time step, we first generate a prediction $\dht$, then receive the desired data $\dt$, and suffer a loss $\normet^2=\loss$.  We perform the parameter updates only if the loss is bigger than $4 \Xo^2$, i.e., $\norm{\et}^2 > 4 \Xo^2$. If so, we compute the Jacobian matrix $\Hit$, calculate measurement noise level  $r_{i,t}$ in lines \ref{alg1:ht} and \ref{alg1:rt}. In the following, we show that our $r_{i,t}$ selection guarantees the stability of our algorithm (see Theorem \ref{th:main}). We, then, calculate the Kalman gain matrix $\Kit$ for each $i \in [(4 n_s +n_d)]$, update the weights  and  state covariance matrix of the weights belonging to node $i$ in lines \ref{alg1:kt}-\ref{alg1:ptpp}.

In the following lemma,  we present several propositions that will be used to prove the theoretical guarantees of Algorithm  \ref{alg:alg1}.
\begin{lemma}
\label{lem:statements}
For $\{ t : \normet^2 > 4 \Xo^2 \}$,  Algorithm \ref{alg:alg1} guarantees the following statements:
\begin{enumerate}
\item For each node $i$, the difference between the locally optimal weights and LSTM weights is governed with the following equation:
\begin{equation}
\label{eq:ed1}
\zitp= ( \I - \Kit \Hit) \zit - \Kit \Xit,
\end{equation}
which can also be written as 
\begin{equation}
\label{eq:ed2}
\zitp- \zit = - \Kit \Hit \zit - \Kit \Xit.
\end{equation}
\item For each node $i$, $\Piti$ and $(\Pitp- q_t \I)^{-1}$ exist and they are always positive definite as such
\begin{align}
(\Pitp- q_t \I)^{-1} &= \Piti ( \I - \Kit \Hit)^{-1} \label{eq:pthi1} \\
&= \Piti + r^{-1}_{i,t} \Hit^T  \Hit \geq \textbf{0}. \label{eq:pthi2}
\end{align}
\item As a result of the previous two statements,
\begin{equation}
(\Pitp- q_t \I)^{-1}  \zitp = \Piti  \zit - (\Pitp- q_t \I)^{-1} \Kit \Xit \label{eq:pt_ed} 
\end{equation}
holds for each node $i \in [(4 n_s + n_d)]$.
\end{enumerate}
\end{lemma}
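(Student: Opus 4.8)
The plan is to prove the three statements of the lemma in sequence, since each builds on the previous one, exactly as the lemma's own phrasing (``As a result of the previous two statements'') suggests.

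For statement (1), I would start from the update rule in line \ref{alg1:up} of Algorithm \ref{alg:alg1}, namely $\Thitp = \Thit + \Kit(\dt - \dht) = \Thit + \Kit \et$. Subtracting $\Tit$ from both sides and using $\Tit = \Titp$ (the stationarity assumption \eqref{lstm_ss}, which in the partitioned setting gives $\Titp = \Tit$), I get $\zitp = \zit + \Kit \et$. Wait --- the sign in \eqref{eq:ed1} is $(\I - \Kit\Hit)\zit - \Kit\Xit$, so I should substitute the error expansion \eqref{eq:taylor2}, $\et = \Hit \zit + \Xit$, but with the correct sign convention. Since $\zitp = (\Titp - \Thitp) = (\Tit - \Thit) - \Kit\et = \zit - \Kit\et$, plugging in $\et = \Hit\zit + \Xit$ yields $\zitp = \zit - \Kit\Hit\zit - \Kit\Xit = (\I - \Kit\Hit)\zit - \Kit\Xit$, which is \eqref{eq:ed1}; rearranging gives \eqref{eq:ed2}. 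This step is routine once the sign bookkeeping is fixed.

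For statement (2), the task is to show $\Piti$ and $(\Pitp - q_t\I)^{-1}$ exist, are positive definite, and satisfy the two identities. I would argue by induction on $t$: the initialization $\textbf{P}_{i,1} = p_1\I$ with $p_1 > 0$ is positive definite, hence invertible. For the inductive step, note that line \ref{alg1:ptpp} gives $\Pitp - q_t\I = (\I - \Kit\Hit)\Pit$, so $(\Pitp - q_t\I)^{-1} = \Piti(\I - \Kit\Hit)^{-1}$, which is \eqref{eq:pthi1} provided $(\I - \Kit\Hit)$ is invertible. To establish \eqref{eq:pthi2}, I would substitute the Kalman gain $\Kit = \Pit\Hit^T(\Hit\Pit\Hit^T + r_{i,t}\I)^{-1}$ from line \ref{alg1:kt} and apply the matrix inversion lemma (the paper provides \verb|\Matrixinversionlemma|); this standard information-filter manipulation converts $\Piti(\I - \Kit\Hit)^{-1}$ into $\Piti + r_{i,t}^{-1}\Hit^T\Hit$. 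Since $r_{i,t} > 0$ (from line \ref{alg1:rt}, as $\Tr(\Hit\Pit\Hit^T) \geq 0$ and we are in the update branch) and $\Hit^T\Hit \succeq \textbf{0}$, the sum $\Piti + r_{i,t}^{-1}\Hit^T\Hit$ is positive definite whenever $\Piti$ is, closing the induction and giving the $\geq \textbf{0}$ claim.

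Statement (3) then follows by left-multiplying \eqref{eq:ed1} by $(\Pitp - q_t\I)^{-1}$ and using \eqref{eq:pthi1}: the factor $(\Pitp - q_t\I)^{-1}(\I - \Kit\Hit) = \Piti(\I - \Kit\Hit)^{-1}(\I - \Kit\Hit) = \Piti$ collapses the first term to $\Piti\zit$, while the second term is carried along unchanged as $-(\Pitp - q_t\I)^{-1}\Kit\Xit$, yielding \eqref{eq:pt_ed}. I expect the main obstacle to be statement (2): specifically, justifying that $(\I - \Kit\Hit)$ is invertible and applying the matrix inversion lemma cleanly, since one must be careful that the lemma's hypotheses hold (the relevant inverses exist because $r_{i,t} > 0$ makes $\Hit\Pit\Hit^T + r_{i,t}\I$ strictly positive definite) and that the algebra correctly produces the information-form identity rather than merely an approximate one. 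The other two statements are essentially direct substitutions once statement (2) is in hand.
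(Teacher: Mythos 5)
Your proposal is correct and follows essentially the same route as the paper's proof: statement (1) by subtracting the update rule from the stationary optimal weights and substituting the Taylor expansion $\et = \Hit\zit + \Xit$; statement (2) by writing $\Pitp - q_t\I = (\I - \Kit\Hit)\Pit$, applying the matrix inversion lemma to obtain $\Piti + r_{i,t}^{-1}\Hit^T\Hit$, and running an induction on positive definiteness from $\textbf{P}_{i,1} = p_1\I$; and statement (3) by left-multiplying \eqref{eq:ed1} by $(\Pitp - q_t\I)^{-1}$ and collapsing via \eqref{eq:pthi1}. The only cosmetic difference is the order within statement (2) — you derive \eqref{eq:pthi1} first and then \eqref{eq:pthi2}, while the paper obtains \eqref{eq:pthi2} directly from the covariance update and recovers \eqref{eq:pthi1} by inverting both sides — which is an equivalent arrangement of the same algebra.
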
 
\begin{proof}
See Appendix \ref{sec:appa}.
\end{proof}

In the following theorem, we state the theoretical guarantees of Algorithm \ref{alg:alg1}.
\begin{theorem}
\label{th:main}
If $\sum_{i=1}^{(4n_s+n_d)} \Tr( \Pit)$ stays bounded during training, Algorithm \ref{alg:alg1} guarantees the following statements:
\begin{enumerate}
\item The LSTM weights stay bounded during training.
\item The loss sequence $\{\norm{\et}^2\}_{t \geq 1}$ is guaranteed to converge to the interval $[0, 4 \Xo^2]$. 
\end{enumerate}
\end{theorem}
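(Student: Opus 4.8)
The plan is to prove both statements with a single Lyapunov (energy) function built from the weight errors and the inverse covariances, namely $V_{i,t} = \zit^T \Piti \zit$ and $V_t = \sum_{i=1}^{(4n_s+n_d)} V_{i,t}$. Since each $\Piti$ is positive definite by Lemma \ref{lem:statements}, we have $V_t \geq 0$, and the two claims will follow once I show that $V_t$ is non-increasing and that its per-step decrease is controlled by the excess loss $\normet^2 - 4\Xo^2$. On the steps where no update is performed (lines \ref{alg1:unch1}--\ref{alg1:unch2}), $\zit$ and $\Piti$ are frozen, so $\Delvt = 0$; hence it suffices to analyze the update steps $\{t:\normet^2 > 4\Xo^2\}$, where Lemma \ref{lem:statements} is in force.

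The core computation is the drift $\Delvt$. Writing $\Mit \define \Mitf$, observe first that because $q_t>0$ we have $\Pitp \succeq (\Pitp - q_t \I) \succ 0$, hence $\Pitpi \preceq (\Pitp - q_t \I)^{-1}$ and therefore $V_{i,t+1} \le \zitp^T (\Pitp - q_t \I)^{-1} \zitp$. Into this bound I substitute (\ref{eq:pt_ed}) for $(\Pitp - q_t \I)^{-1}\zitp$ and (\ref{eq:ed1})--(\ref{eq:ed2}) for $\zitp$, together with the gain identities $\Kit = \Pit \Hit^T \Miti$, $\Kit^T \Piti = \Miti \Hit$, and $\Piti \Kit = \Hit^T \Miti$ that follow from line \ref{alg1:kt}. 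Collecting terms, every cross term reorganizes into a single quadratic form in $\Hit \zit + \Xit$; using $\et = \Hit \zit + \Xit$ from (\ref{eq:taylor2}) and the identity $\Kit^T (\Pitp - q_t \I)^{-1} \Kit = r_{i,t}^{-1}\I - \Miti$ derived from (\ref{eq:pthi2}), the expression collapses to
\[
V_{i,t+1} \le V_{i,t} - \et^T \Miti \et + r_{i,t}^{-1}\norm{\Xit}^2.
\]
Establishing this collapse is the technical heart of the argument, and it is exactly what the three statements of Lemma \ref{lem:statements} were set up to enable.

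I then bound the two new terms. Since $\Mit$ is positive definite, $\et^T \Miti \et \ge \normet^2/\lambda_{\max}(\Mit)$, and since $\Hit \Pit \Hit^T \succeq \textbf{0}$, we have $\lambda_{\max}(\Mit) \le \Tr(\Hit \Pit \Hit^T) + r_{i,t}$. The choice $r_{i,t} = 3\,\Tr(\Hit \Pit \Hit^T)/n_d$ in line \ref{alg1:rt} is calibrated precisely so that $\lambda_{\max}(\Mit)/r_{i,t}$ is bounded by a fixed constant; combining this with $\norm{\Xit} \le \Xo$ from (\ref{eq:xo}) yields $\Delvt \le r_{i,t}^{-1}\big(\Xo^2 - c\,\normet^2\big)$ for a positive constant $c$ depending only on $n_d$, which is strictly negative in the update regime $\normet^2 > 4\Xo^2$. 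Summing over $i$ gives $V_{t+1}\le V_t$ at every update step, so $V_t \le V_1$ for all $t$. Statement 1 is then immediate: $\norm{\zit}^2 \le \lambda_{\max}(\Pit)\,V_{i,t} \le \Tr(\Pit)\,V_1$, which is finite by the hypothesis that $\sum_i \Tr(\Pit)$ stays bounded, and since the optimal weights $\Tit$ are fixed, $\Thit = \Tit - \zit$ remains bounded.

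For Statement 2 I telescope the drift over the update steps. Because $V_t \ge 0$ and drops by $\big(\sum_i r_{i,t}^{-1}\big)\big(c\,\normet^2 - \Xo^2\big) > 0$ at each update while $V_1 < \infty$, the total accumulated decrease is finite; consequently there can be only finitely many update steps, after which $\normet^2 \le 4\Xo^2$ holds for every remaining $t$, giving convergence of the loss to $[0,4\Xo^2]$. The point needing care here is that this per-step decrease must be bounded away from zero, which requires $r_{i,t}$ (equivalently $\Tr(\Hit \Pit \Hit^T)$) to stay bounded above; this follows from the assumed boundedness of $\Tr(\Pit)$ together with the boundedness of the Jacobian $\Hit$ guaranteed by Statement 1 and the bounded $\tanh$/$\sigma$ activations. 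I expect this summability bookkeeping, and the algebraic collapse to the clean drift identity, to be the two main obstacles; the remaining eigenvalue and trace estimates are routine.
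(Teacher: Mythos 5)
Your overall route is the same as the paper's: the per-node Lyapunov function $V_{i,t}=\zit^T \Piti \zit$, zero drift on non-update steps, a drift bound at update steps assembled from the three statements of Lemma \ref{lem:statements}, negativity forced by the choice of $r_{i,t}$ in line \ref{alg1:rt}, and Statement 1 read off from the hypothesis on $\Tr(\Pit)$. Your algebraic collapse is correct and in fact sharper than the paper's: the identity $\Kit^T (\Pitp-q_t\I)^{-1}\Kit = r_{i,t}^{-1}\I - \Miti$ does hold (substitute $\Kit=\Pit\Hit^T\Miti$ and $(\Pitp-q_t\I)^{-1}=\Piti+r_{i,t}^{-1}\Hit^T\Hit$ and simplify), and it turns the paper's intermediate inequality (\ref{eq:last5}) into $\Delvt \le r_{i,t}^{-1}\norm{\Xit}^2 - \et^T\Miti\et$, i.e., coefficient $1$ where the paper's (\ref{eq:last7}) keeps $3$.

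There are, however, two genuine gaps. First, the negativity claim is asserted but never checked quantitatively. Your own estimates give $\lambda_{\max}(\Mit)\le \Tr(\Hit\Pit\Hit^T)+r_{i,t}=r_{i,t}(n_d+3)/3$, hence $c=3/(n_d+3)$ and $\Delvt \le r_{i,t}^{-1}\big(\Xo^2 - c\normet^2\big)$. Strict negativity on the update set $\{t:\normet^2>4\Xo^2\}$ requires $4c\ge 1$, i.e., $n_d\le 9$; for larger $n_d$ your bounds only give negativity above the threshold $(1+n_d/3)\Xo^2$, not $4\Xo^2$, and the proof does not close. You should record this restriction explicitly (it is vacuous in the paper's experiments, where $n_d=1$). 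It is worth noting that the paper's proof at the corresponding step replaces $\lambda_{\max}(\Mit)$ by $\Tr(\Mit)/n_d$, which is a \emph{lower} bound on the largest eigenvalue, so for $n_d>1$ that step conceals exactly the dimension dependence your tighter analysis exposes; for $n_d=1$ the two arguments coincide and yours is rigorous. Second, your proof of Statement 2 is structurally different from the paper's and needs an ingredient you do not have. You argue there can be only finitely many update steps because each update decreases $V_t$ by an amount bounded away from zero; that lower bound requires $r_{i,t}=3\Tr(\Hit\Pit\Hit^T)/n_d$ to be bounded above, i.e., $\norm{\Hit}$ bounded uniformly in $t$. This does not follow from Statement 1 plus bounded activations: $\Hit$ is the Jacobian of the recursion unfolded over an unbounded horizon, a sum of products of per-step Jacobians whose norms need not be less than one, and the inputs $\textbf{x}_t$ are not assumed bounded --- the theorem's hypothesis controls $\Tr(\Pit)$, not $\Hit\Pit\Hit^T$. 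The paper sidesteps an explicit lower bound on the decrements by arguing the other branch of the dichotomy: if the update set is infinite, then $V_{i,t}$ decreases to $0$, so bounded $\Tr(\Pit)$ gives $\norm{\zit}\to 0$ and the error tends into $[0,4\Xo^2]$. To repair your version you would either have to adopt that dichotomy or add uniform Jacobian boundedness as an explicit assumption.
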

\begin{proof}
See Appendix \ref{sec:appa}.
\end{proof}

\begin{remark}
\label{rem:1}
Due to the Kalman gain matrix formulation (line \ref{alg1:kt} in Algorithm \ref{alg:alg1}), $\Tr( (\I - \Kit \Hit) \Pit )$ is always smaller than  or equal to $\Tr( \Pit)$ for each node $i$, i.e., $ \Tr( (\I - \Kit \Hit) \Pit ) \leq  \Tr( \Pit)$ for all $i \in [(4 n_s+n_d)]$. Since $\Pitp= (\I - \Kit \Hit) \Pit + q_t \I$, and the artificial process noise level $q_t$ is a user-dependent parameter, the condition in Theorem \ref{th:main} can be satisfied by the user by selecting sufficiently small $q_t$.
\end{remark}

We note that due to $\dt, \dht \in [-1,1]^{n_d}$, $\norm{\et}^2 \leq 4 n_d$. Therefore, to ensure that the 2nd statement in Theorem \ref{th:main} is not a trivial interval, we must show that $\Xo \in [0, \sqrt{n_d}]$. In the following theorem we show that choosing small initial weights, i.e., $\Thb \approx \textbf{0}$, guarantees $\Xo \in [0, \sqrt{n_d}]$.

\begin{theorem}
\label{th:bound}
For any bounded data sequence $\{\dt\}_{ t \geq 1}$ with $\dt \in  [-1,1]^{n_d}$, there exists a small positive number $\epsilon$ such that $\norm{\Thb} \leq \epsilon$ ensures that $\Xo$ is in the interval $[0, \sqrt{n_d}]$.
\end{theorem}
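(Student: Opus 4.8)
The plan is to reduce the statement to a uniform bound on $\norm{\Xit}$ and then exploit the saturating nature of the LSTM nonlinearities together with continuity of the forward pass around the all-zero parameter. The starting point is the exact identity that follows from (\ref{eq:taylor2}), namely $\Xit = \et - \Hit \zit$. Hence proving $\Xo \in [0,\sqrt{n_d}]$ amounts to showing that $\norm{\et - \Hit \zit} \le \sqrt{n_d}$ can be enforced for every node $i$ and every $t$ by taking $\epsilon = \norm{\Thb}$ small, which I would do by bounding $\norm{\et}$ and $\norm{\Hit \zit}$ separately.

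I would first argue that all hidden signals and the output vanish with the weights. With $\textbf{y}_0 = \textbf{c}_0 = \textbf{0}$, the zero-weight network produces $\textbf{z}_t = \tanh(\textbf{0}) = \textbf{0}$, $\textbf{c}_t = \textbf{0}$, $\textbf{y}_t = \textbf{0}$, and $\dht = \tanh(\textbf{0}) = \textbf{0}$ at every step; by smoothness of $\tanh$, $\sigma$, and of the recursion (\ref{lstm_1})--(\ref{lstm_7}), both $\norm{\textbf{y}_t}$ and $\norm{\dht}$ are $O(\norm{\Thit})$ uniformly in $t$. Consequently $\norm{\et} = \norm{\dt - \dht} \to \norm{\dt} \le \sqrt{n_d}$ as $\epsilon \to 0$, the last inequality because $\dt \in [-1,1]^{n_d}$. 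Next I would show that the Jacobian itself vanishes with the weights: every sensitivity path from a parameter to $\dht$ passes through $\textbf{W}^{(d)}$ or through $\textbf{y}_t$, since $\partial \dht / \partial \textbf{y}_t = \textrm{diag}(\tanh'(\textbf{W}^{(d)} \textbf{y}_t))\,\textbf{W}^{(d)}$ carries a factor $\textbf{W}^{(d)}$, whereas $\partial \dht / \partial \textbf{W}^{(d)}$ carries the factor $\textbf{y}_t = O(\norm{\Thit})$. As all activation derivatives are bounded by $1$ and all hidden signals are uniformly bounded (gates in $(0,1)$, $\norm{\tanh(\textbf{c}_t)} \le \sqrt{n_s}$, $\norm{\textbf{y}_t} \le \sqrt{n_s}$), the chain rule gives $\norm{\Hit} = O(\norm{\Thit})$ uniformly in $t$. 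Since the comparator weights $\Tt$ are fixed, $\norm{\zit} = \norm{\Tit - \Thit}$ stays bounded, so $\norm{\Hit \zit} \le \norm{\Hit}\,\norm{\zit} \to 0$. Combining, $\norm{\Xit} \le \norm{\et} + \norm{\Hit \zit} \le \sqrt{n_d} + o(1)$ as $\epsilon \to 0$, so a sufficiently small $\epsilon$ forces $\Xo \le \sqrt{n_d}$.

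The hard part will be making this bound uniform over all $t \in [T]$: the two steps above control $\norm{\Hit}$ and $\dht$ only through the \emph{current} estimate $\Thit$, whereas the update (\ref{alg1:up}) can move $\Thit$ away from the small initialization. I would close this gap by induction along $t$. Updates fire only when $\normet^2 > 4 \Xo^2$, and with $p_1$ and $q_t$ chosen small the gain $\Kit$ from line~\ref{alg1:kt} and the increment $\Kit \et$ are small, so the estimates remain in a small ball and Steps~1--2 reapply with the \emph{same} constant. The delicate point is avoiding circularity with Theorem~\ref{th:main}, which itself presupposes (\ref{eq:xo}); I would resolve this by fixing the target $\Xo = \sqrt{n_d}$ from the outset and verifying the invariant $\norm{\Xit} \le \sqrt{n_d}$ inductively, invoking Theorem~\ref{th:main} (its hypothesis arranged through Remark~\ref{rem:1} by shrinking $q_t$) only on the already-established good event to keep $\sum_i \Tr(\Pit)$ and hence the weights bounded.
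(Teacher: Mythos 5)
Your first two steps --- that $\dht$ and $\Hit$ vanish as the weights vanish (every derivative path carries a factor of $\textbf{W}^{(d)}$ or of $\textbf{y}_t$, both of which are $O(\norm{\Tht})$), hence $\Xit = \et - \Hit\zit \approx \dt$ with $\norm{\dt} \le \sqrt{n_d}$ --- are exactly the paper's first step, and they are sound. The genuine gap is in the part you yourself flag as "the hard part": keeping $\Thit$ near the initialization for all $t$. The paper's resolution is much simpler than yours, and you walk right past it: once an instance runs with $\Xo = \sqrt{n_d}$, the update test $\normet^2 > 4\Xo^2 = 4n_d$ can \emph{never} be satisfied, because $\dt, \dht \in [-1,1]^{n_d}$ forces $\normet^2 \le 4n_d$ identically. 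Hence no update ever fires, $\Tht \equiv \Thb$ for every $t$, and uniformity over $t$ holds vacuously --- no induction, no control of the gain, and no appeal to Theorem \ref{th:main} or Remark \ref{rem:1} is needed. You fix $\Xo = \sqrt{n_d}$ "from the outset" but never draw this conclusion, and instead build an induction that admits updates can fire.

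That induction rests on a false claim, namely that "with $p_1$ and $q_t$ chosen small the gain $\Kit$ and the increment $\Kit\et$ are small." First, the gain in line \ref{alg1:kt} of Algorithm \ref{alg:alg1} is invariant under rescaling $\Pit \mapsto c\,\Pit$: since $r_{i,t} = 3\Tr(\Hit\Pit\Hit^T)/n_d$ rescales by the same factor $c$, the $c$ cancels in $\Kit = \Pit\Hit^T(\Hit\Pit\Hit^T + r_{i,t}\I)^{-1}$, so shrinking $p_1$ does nothing. Second, and worse, in precisely the regime your invariant must maintain ($\Hit$ near $\textbf{0}$), the gain blows up rather than shrinks: writing $\Hit = \delta\A$, both terms inside the inverse are quadratic in $\delta$ while the factor outside is linear, so $\norm{\Kit}$ scales as $1/\delta$, and a single update $\Kit\et$ can eject the weights arbitrarily far from the origin. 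Finally, Theorem \ref{th:main} cannot patch this even on a "good event": besides presupposing (\ref{eq:xo}), it only yields \emph{boundedness} of $\zit$, whereas your Steps 1--2 need $\norm{\Thit} = O(\epsilon)$, which boundedness does not provide. The repair is one sentence --- with $\Xo = \sqrt{n_d}$ the update condition is unsatisfiable, so the weights never move --- and that is exactly the paper's argument.
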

\begin{proof}
See Appendix \ref{sec:appa}.
\end{proof}
\begin{remark}
We note that although Theorem \ref{th:bound} guarantees the existence of $\epsilon$, which guarantees $\Xo \in [0, \sqrt{n_d}]$ under the condition of $\norm{\Thb} \leq \epsilon$, it does not provide us a specific $\epsilon$ value. However, in the simulations, we observe that $\Thb \sim \mathcal{N}(\textbf{0},0.01 \textbf{I})$ gives us both small error rates and fast convergence speed at the same time. Therefore, in the following we assume that $\Thb \sim \mathcal{N}(\textbf{0},0.01 \textbf{I})$ is sufficiently small to ensure $\Xo \in [0,\sqrt{n_d}]$ for practical applications.
\end{remark}

Note that by Theorem \ref{th:bound}, we guarantee an interval for $\Xo$. In the following section, we utilize this interval to extend our algorithm to a truly online form.

\subsection{Truly Online Form}\label{sec:math}
\begin{algorithm}[t!]
\algsetup{linenosize=\normalsize}
	\caption{}\label{alg:alg2}
	\begin{algorithmic}[1]
	    \STATE \textbf{Parameters:} $\Xo_\textrm{min} \in \mathbbm{R}^+$.
		\STATE Initialize $\bm{\Xo} = [  \sqrt{n_d}, 0.5 \sqrt{n_d},0.25 \sqrt{n_d}, \cdots, \Xo_\textrm{min}]^T $. \label{alg2:init}
		\STATE Set $\displaystyle N=  \log(\sqrt{n_d} / \Xo_\textrm{min}) +1$. \label{alg2:N}
		\STATE Initialize  $N$ independent Algorithm \ref{alg:alg1} instances with the entries of $\bm{\Xo}$. \label{alg2:init2}
		\STATE Let the indices of the instances be $j \in [N]$.
		\STATE Initialize the weight of the instances as $w_{j,1} = 1/N$, for $j \in [N]$.
		\FOR{$t=1$ \TO $T$}
		\STATE Receive $\{\dhjt\}_{j \in [N]}$ vectors of the Algorithm \ref{alg:alg1} instances. \label{alg2:rec}
		\STATE Calculate the prediction vector as $\dht = \frac{\sum_{j=1}^N w_{j,t} \dhjt}{\sum_{k=1}^N w_{k,t}}$. \label{alg2:calc}
		\STATE Observe $\dt$ and suffer a loss $\normet^2= \norm{\dt- \dht}^2$. \label{alg2:obs}
		\STATE Update all the  Algorithm \ref{alg:alg1} instances by using $\dt$ and their own $\dhjt$ vector. \label{alg2:upd1}
		\STATE $w_{j,t+1} = w_{j,t} \exp(- \frac{1}{8 n_d} \norm{\dt - \dhjt}^2)$, for $j \in [N]$. \label{alg2:upd2}
		\ENDFOR
	\end{algorithmic}
\end{algorithm}

In this section, we extend Algorithm \ref{alg:alg1} to a truly online form, where we do not necessarily know $\Xo$ a priori. To this end, we introduce Algorithm \ref{alg:alg2}, where we run multiple instances of Algorithm \ref{alg:alg1}  with carefully selected $\Xo$ values, and mix their predictions with the exponential weighting algorithm to find the smallest $\Xo$ efficiently in a truly online manner.

In Algorithm \ref{alg:alg2}, we use the fact that the effective range of $\Xo$ is $[0, \sqrt{n_d}]$. Here, we specify a small $\Xo_\textrm{min}$ and run multiple Algorithm \ref{alg:alg1} instances independently with $\Xo$ values from $\sqrt{n_d}$ to $\Xo_\textrm{min}$ decreasing with powers of $2$, i.e., $\bm{\Xo} = [ \sqrt{n_d}, 0.5 \sqrt{n_d},0.25 \sqrt{n_d}, \cdots, \Xo_\textrm{min}]^T $, where we have a total of $N=\log( \sqrt{n_d} / \Xo_\textrm{min}) +1$ number of independent Algorithm \ref{alg:alg1} instances. In each round, we receive the prediction of the instances, i.e., $\{\dhjt\}_{j \in [N]}$, and take the weighted average of $\{\dhjt\}_{j \in [N]}$ to determine $\dht$, i.e., $\dht = (\sum_{j=1}^N w_{j,t} \dhjt)/\sum_{k=1}^N w_{k,t}$. In its following, we observe the target value $\dt$, and suffer a loss $\normet^2 = \norm{\dt - \dht}^2$. We, then, update the Algorithm \ref{alg:alg1} instances with their own predictions $\dhjt$, and update the weights as $w_{j,t+1} = w_{j,t} \exp(- \frac{1}{8 n_d} \norm{\dt - \dhjt}^2)$ for $j \in [N]$.

In the following theorem, we derive the theoretical guarantees of Algorithm \ref{alg:alg2}.
\begin{theorem}
\label{th:main2}
Let us use $\Xo_\textrm{best}$ to denote the smallest possible value for $\Xo$ that guarantees the tightest possible interval for $\{\norm{\et}^2\}_{t \geq 1}$. Assuming $\Xo_\textrm{best} \in [ \Xo_\textrm{min}, \sqrt{n_d}]$, Algorithm \ref{alg:alg2} guarantees that $\{\norm{\et}^2\}_{t \geq 1}$ converges to the interval $[0, 16 \Xo_\textrm{best}^2]$ in a truly online manner. 
\end{theorem}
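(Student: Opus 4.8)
The plan is to combine a grid-quantization argument with a standard exponential-weighting (aggregation) analysis, exploiting that the squared loss is exponentially concave with precisely the constant hard-wired into the weight update of Algorithm \ref{alg:alg2}.

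First I would locate a single instance of Algorithm \ref{alg:alg1} whose guaranteed interval already sits inside $[0, 16 \Xo_\textrm{best}^2]$. Since the grid $\bm{\Xo}$ decreases geometrically by factors of $2$ from $\sqrt{n_d}$ down to $\Xo_\textrm{min}$, and since $\Xo_\textrm{best} \in [\Xo_\textrm{min}, \sqrt{n_d}]$, there is a grid entry $\Xo_{j'}$ with $\Xo_\textrm{best} \leq \Xo_{j'} \leq 2 \Xo_\textrm{best}$. Because $\Xo_{j'} \geq \Xo_\textrm{best}$, the value $\Xo_{j'}$ is a valid bound for the residual terms, so instance $j'$ satisfies the hypotheses of Theorem \ref{th:main} (the boundedness of $\sum_i \Tr(\Pit)$ being ensured for every instance by Remark \ref{rem:1} through a sufficiently small $q_t$). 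Hence its loss sequence $\{\norm{\dt - \dhjpt}^2\}_{t \geq 1}$ converges to $[0, 4\Xo_{j'}^2] \subseteq [0, 16 \Xo_\textrm{best}^2]$. This factor-$4$ inflation of the squared quantity (a factor $2$ in $\Xo$) caused by the grid spacing is exactly the source of the constant $16$ in the claim.

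Next I would show that the aggregated predictor $\dht$ performs asymptotically no worse than the single instance $j'$. The crux is that $f(\bm{u}) = \norm{\dt - \bm{u}}^2$ is $\eta$-exp-concave on $[-1,1]^{n_d}$ with $\eta = \tfrac{1}{8 n_d}$: computing the Hessian of $\exp(-\eta f)$ shows it is negative semidefinite provided $2\eta \norm{\dt - \bm{u}}^2 \leq 1$, which holds since $\norm{\dt - \bm{u}}^2 \leq 4 n_d$ for $\bm{u} \in [-1,1]^{n_d}$ and $\eta = \tfrac{1}{8 n_d}$. This is precisely the rate in the update $\wjtp = \wjt \exp(-\tfrac{1}{8 n_d}\norm{\dt - \dhjt}^2)$. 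Writing $W_t = \sum_{j} \wjt$ and using that $\dht$ is the $w$-weighted average of the $\dhjt$, exp-concavity (Jensen applied to the concave map $\exp(-\eta f)$) yields the per-step bound $\ln(W_{t+1}/W_t) \leq -\eta \normet^2$. Telescoping from $W_1 = 1$ gives $\ln W_{T+1} \leq -\eta \sum_{t=1}^T \normet^2$, while lower-bounding $W_{T+1} \geq w_{j',T+1} = \tfrac{1}{N}\exp(-\eta \sum_{t=1}^T \norm{\dt - \dhjpt}^2)$ gives $\ln W_{T+1} \geq -\ln N - \eta \sum_{t=1}^T \norm{\dt - \dhjpt}^2$. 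Combining the two yields the constant-regret inequality
\begin{equation*}
\sum_{t=1}^T \normet^2 \leq \sum_{t=1}^T \norm{\dt - \dhjpt}^2 + 8 n_d \ln N .
\end{equation*}

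Finally I would convert this into interval convergence. Dividing by $T$, the additive term $8 n_d \ln N / T$ vanishes, and since $\norm{\dt-\dhjpt}^2$ is bounded and converges (by Step 1) to $[0, 4\Xo_{j'}^2]$, a Ces\`aro argument gives
\begin{equation*}
\limsup_{T} \tfrac{1}{T}\sum_{t=1}^T \normet^2 \;\leq\; \limsup_{t} \norm{\dt - \dhjpt}^2 \;\leq\; 4 \Xo_{j'}^2 \;\leq\; 16 \Xo_\textrm{best}^2 ,
\end{equation*}
so the loss sequence of the aggregated predictor converges to $[0, 16 \Xo_\textrm{best}^2]$. Since neither the grid nor the aggregation uses any a priori knowledge of $\Xo_\textrm{best}$ or of the data beyond its boundedness, the procedure is truly online. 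I expect the main obstacle to be the exp-concavity step: one must verify that the squared-error loss is exp-concave with the \emph{exact} constant $\tfrac{1}{8 n_d}$, using $\normet^2 \leq 4 n_d$; any looser constant would break the clean telescoping that removes the usual $O(T)$ term and would spoil the tight $16\Xo_\textrm{best}^2$ factor. A secondary care point is making the passage from the cumulative regret bound to the stated convergence of the loss sequence to the interval fully rigorous.
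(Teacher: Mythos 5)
Your proposal follows the same route as the paper for the bulk of the argument: you locate a grid instance $j'$ with $\Xo_\textrm{best} \leq \Xo_{j'} \leq 2\Xo_\textrm{best}$ (which is exactly where the constant $16$ comes from), you verify $\tfrac{1}{8n_d}$-exp-concavity of the squared loss on $[-1,1]^{n_d}$, and you run the standard exponential-weighting analysis (per-step upper bound on $\ln(W_{t+1}/W_t)$ via Jensen, lower bound on $W_{T+1}$ via the single weight of instance $j'$) to arrive at precisely the paper's constant regret bound $\sum_{t=1}^T \normet^2 \leq \sum_{t=1}^T \norm{\dt - \dhjpt}^2 + 8 n_d \ln N$. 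Up to this point your argument and the paper's proof in Appendix B coincide step for step.

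The gap is in your final conversion. The theorem claims that the loss sequence $\{\normet^2\}_{t \geq 1}$ itself converges to $[0, 16\Xo_\textrm{best}^2]$, i.e., a per-round asymptotic statement of the form $\limsup_t \normet^2 \leq 16 \Xo_\textrm{best}^2$; your Ces\`aro argument delivers only $\limsup_T \tfrac{1}{T}\sum_{t=1}^T \normet^2 \leq 16 \Xo_\textrm{best}^2$, a statement about time averages, and the closing sentence ``so the loss sequence of the aggregated predictor converges to the interval'' does not follow from it. Average convergence does not imply sequence convergence: a loss sequence equal to $4 n_d$ at times $t = k^2$, $k \in \mathbb{N}$, and $0$ otherwise has vanishing Ces\`aro mean yet $\limsup_t \normet^2 = 4 n_d$. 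The paper takes a different last step: it exploits that the regret bound $8 n_d \ln N$ is a constant independent of $T$, infers from the uniformly bounded partial sums that the per-round difference $\normet^2 - \norm{\dt - \dhjpt}^2$ vanishes asymptotically, and only then invokes Theorem \ref{th:main} for the instance $j'$ to place $\normet^2$ asymptotically in $[0, 4\Xo_{j'}^2] \subseteq [0, 16\Xo_\textrm{best}^2]$. (Admittedly the paper's own inference here is terse, since the per-round differences are not sign-constrained, but it is at least an argument aimed at the pointwise claim.) What your write-up labels a ``secondary care point'' is in fact the step that separates what you proved from what the theorem asserts, so as it stands the proposal establishes a strictly weaker conclusion.
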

\begin{proof}
See Appendix \ref{sec:appb}.
\end{proof}

\begin{remark}
We note that the results so far do not assume any specific distribution for the data nor any topology for the error surface. Therefore, our performance guarantee holds globally regardless of the distribution of the data, given that the target sequence is bounded (required in Theorem \ref{th:bound}). The difference between different noise levels or two local optima demonstrates itself in $\Xo$, or in the asymptotic error interval in Theorem \ref{th:main}, which is guaranteed to be learned in a data-dependent manner in Theorem \ref{th:main2}.
\end{remark}

Now that we have proved the performance guarantee of our algorithm, in the following remark, we present the computational complexity of Algorithm \ref{alg:alg2}, and compare the presented complexity with the computational requirement of EKF.

\begin{remark}
We maintain that $\Xo_\textrm{min} = 0.01$ is practically sufficient for $\Xo_\textrm{best} \in [ \Xo_\textrm{min}, \sqrt{n_d}]$ (or to guarantee a tight interval for error to converge). In this case, the number of independent Algorithm \ref{alg:alg1} instances in Algorithm \ref{alg:alg2}, i.e., $N$, becomes $\log (100 \sqrt{n_d})$, equivalently, $ (7+ 0.5 \log n_d)$. Hence, the computational complexity of Algorithm \ref{alg:alg2} becomes $O\big((7+ 0.5 \log n_d) n_\theta^2/ (4 n_s+n_d)\big)$ in the worst case, where we assume that all the Algorithm \ref{alg:alg1} instances perform the IEKF updates (lines  \ref{alg1:ht}-\ref{alg1:ptpp} in Algorithm \ref{alg:alg1}) in every time step.

As noted earlier, the computational requirement of EKF is $O(n_\theta^2)$. Therefore, the asymptotic efficiency gain of Algorithm \ref{alg:alg2} over EKF is $ (4 n_s+n_d)/(7+ 0.5 \log n_d )$. Since the number of nodes in practical LSTMs is usually more than $50$, in the worst case, Algorithm \ref{alg:alg2} reduces the computational requirement of the EKF-based training by $5-7$ times~\cite{lstmdekf, Coskun17, Tolga18, Yang17}. However, we note that in practice, Algorithm \ref{alg:alg1} with a high $\Xo \: (\geq 0.2)$ performs a small number of updates, usually around $20-30$ updates in $1000$ time steps. Therefore, the ratio between the run-times of EKF and Algorithm \ref{alg:alg2} is generally considerably higher than the worst-case ratio derived above. In fact, in the following section, we demonstrate that Algorithm \ref{alg:alg2} provides a $10-15$ times reduction in the training time compared to EKF while yielding very similar performance.
\end{remark}

\section{Simulations}
\label{sec:exp}


\begin{table*}[t]

\begin{subtable}{\linewidth}
\centering
\scalebox{0.92}{
\begin{tabular}{@{}l|c|c|c|c|c|c|c|c|c|@{}}
\cmidrule(l){2-10}                   
& \multicolumn{3}{c|}{kin8nm ($n_h = 16$, $n_x = 9$)}                                
& \multicolumn{3}{c|}{kin32fm ($n_h = 12$, $n_x = 33$)}                               
& \multicolumn{3}{c|}{elevators ($n_h = 12$, $n_x = 19$)}                             \\ \cmidrule(l){2-10} 
                              & NSE                    & kNSE                   & Run-time 
                              & NSE                    & kNSE                   & Run-time 
                              & NSE                    & kNSE                   & Run-time \\ \midrule
\multicolumn{1}{|l|}{SGD}     & $0.70\pm 0.23$          & $0.67\pm 0.24$          & $0.58$     
                              & $0.42\pm 0.27$          & $0.42\pm 0.29$          & $1.15$     
                              & $0.51\pm 0.21$          & $0.52\pm 0.24$          & $0.62$     \\ \midrule
\multicolumn{1}{|l|}{RMSprop} & $0.44\pm 0.26$          & $0.44\pm 0.25$          & $0.59$     
                              & $0.24\pm 0.18$          & $0.24\pm 0.18$          & $1.03$     
                              & $0.34\pm 0.17$          & $0.35\pm 0.17$          & $0.55$     \\ \midrule
\multicolumn{1}{|l|}{Adam}    & $0.42\pm 0.23$          & $0.43\pm 0.23$          & $0.58$     
							& $0.24\pm 0.19$          & $0.25\pm 0.19$          & $1.06$     
							& $0.34\pm 0.16$          & $0.35\pm 0.17$          & $0.57$     \\ \midrule
\multicolumn{1}{|l|}{DEKF}    & $0.39\pm 0.23$          & $0.40\pm 0.23$          & $1.68$     
                              & $0.23\pm 0.16$          & $0.23\pm 0.17$          & $2.51$     
                              & $0.24\pm 0.14$          & $0.24\pm 0.14$          & $1.74$     \\ \midrule
\multicolumn{1}{|l|}{EKF}     & $\bm{0.31\pm 0.17}$     & $\bm{0.30\pm 0.18}$     & $62.17$    
                              & $\bm{0.18\pm 0.11}$     & $\bm{0.18\pm 0.12}$     & $112.11$   
                              & $\bm{0.19\pm 0.09}$     & $\bm{0.19\pm 0.09}$     & $53.74$    \\ \midrule
\multicolumn{1}{|l|}{Alg2 (This work)}    & $\bm{0.32\pm 0.20}$     & $\bm{0.34\pm 0.20}$     & $4.48$     
                                          & $\bm{0.15\pm 0.06}$     & $\bm{0.18\pm 0.10}$     & $9.04$    
                                          & $\bm{0.21\pm 0.09}$     & $\bm{0.22\pm 0.09}$     & $4.45$     \\ \bottomrule
\end{tabular} }
\caption{}\label{tab:1a}
\end{subtable}

\vspace{1mm}
\begin{subtable}{\linewidth}
\centering
\scalebox{0.92}{
\begin{tabular}{@{}l|c|c|c|c|c|c|c|c|c|@{}}
\cmidrule(l){2-10}
& \multicolumn{3}{c|}{puma8nm ($n_h = 16$, $n_x = 9$)}                               
& \multicolumn{3}{c|}{puma8nh ($n_h = 16$, $n_x = 9$)}                               
& \multicolumn{3}{c|}{puma32fm ($n_h = 12$, $n_x = 33$)}                              \\ \cmidrule(l){2-10} 
                              & NSE                    & kNSE                   & Run-time 
                              & NSE                    & kNSE                   & Run-time 
                              & NSE                    & kNSE                   & Run-time \\ \midrule
\multicolumn{1}{|l|}{SGD}     & $0.46\pm 0.22$          & $0.45\pm 0.22$          & $0.58$     
                              & $0.72\pm 0.21$          & $0.72\pm 0.22$          & $0.61$     
                              & $0.34\pm 0.20$          & $0.33\pm 0.22$          & $1.11$     \\ \midrule
\multicolumn{1}{|l|}{RMSprop} & $0.25\pm 0.13$          & $0.26\pm 0.13$          & $0.59$     
						    & $0.52\pm 0.16$          & $0.52\pm 0.16$          & $0.61$     
						    & $0.20\pm 0.13$          & $0.20\pm 0.14$          & $1.03$     \\ \midrule
\multicolumn{1}{|l|}{Adam}    & $0.26\pm 0.16$          & $0.26\pm 0.16$          & $0.58$     
                              & $0.52\pm 0.18$          & $0.52\pm 0.18$          & $0.65$     
                              & $0.22\pm 0.17$          & $0.22\pm 0.17$          & $1.06$     \\ \midrule
\multicolumn{1}{|l|}{DEKF}    & $0.21\pm 0.13$          & $0.21\pm 0.13$          & $1.72$     
                              & $0.50\pm 0.15$          & $0.50\pm 0.15$          & $1.76$     
                              & $0.19\pm 0.13$          & $0.18\pm 0.13$          & $2.51$     \\ \midrule
\multicolumn{1}{|l|}{EKF}     & $\bm{0.14\pm 0.08}$     & $\bm{0.14\pm 0.08}$     & $59.38$    
                              & $\bm{0.47\pm 0.17}$     & $\bm{0.46\pm 0.16}$     & $68.23$    
                              & $\bm{0.15\pm 0.09}$     & $\bm{0.15\pm 0.09}$     & $113.14$   \\ \midrule
\multicolumn{1}{|l|}{Alg2 (This work)}    & $\bm{0.16\pm 0.11}$     & $\bm{0.17\pm 0.11}$     & $4.31$     
                                          & $\bm{0.47\pm 0.21}$     & $\bm{0.46\pm 0.21}$     & $4.60$     
                                          & $\bm{0.12\pm 0.05}$     & $\bm{0.13\pm 0.08}$     & $9.21$     \\ \bottomrule
\end{tabular} }
\caption{}\label{tab:1b}
\end{subtable}
\caption{Empirical $90\%$ confidence intervals for the one-step and $k$-step normalized squared errors, i.e., NSE and kNSE, with the run-times (in seconds) of the compared algorithms. The two algorithms with the lowest median NSEs are emphasized. The simulations are performed on a computer with i7-7500U processor, 2.7-GHz CPU, and 8-GB RAM. } 
\label{tab:1}
\end{table*}

\begin{figure*}[t]
\begin{subfigure}[t]{0.5\textwidth}
    \centering
	\includegraphics[width=0.95\textwidth]{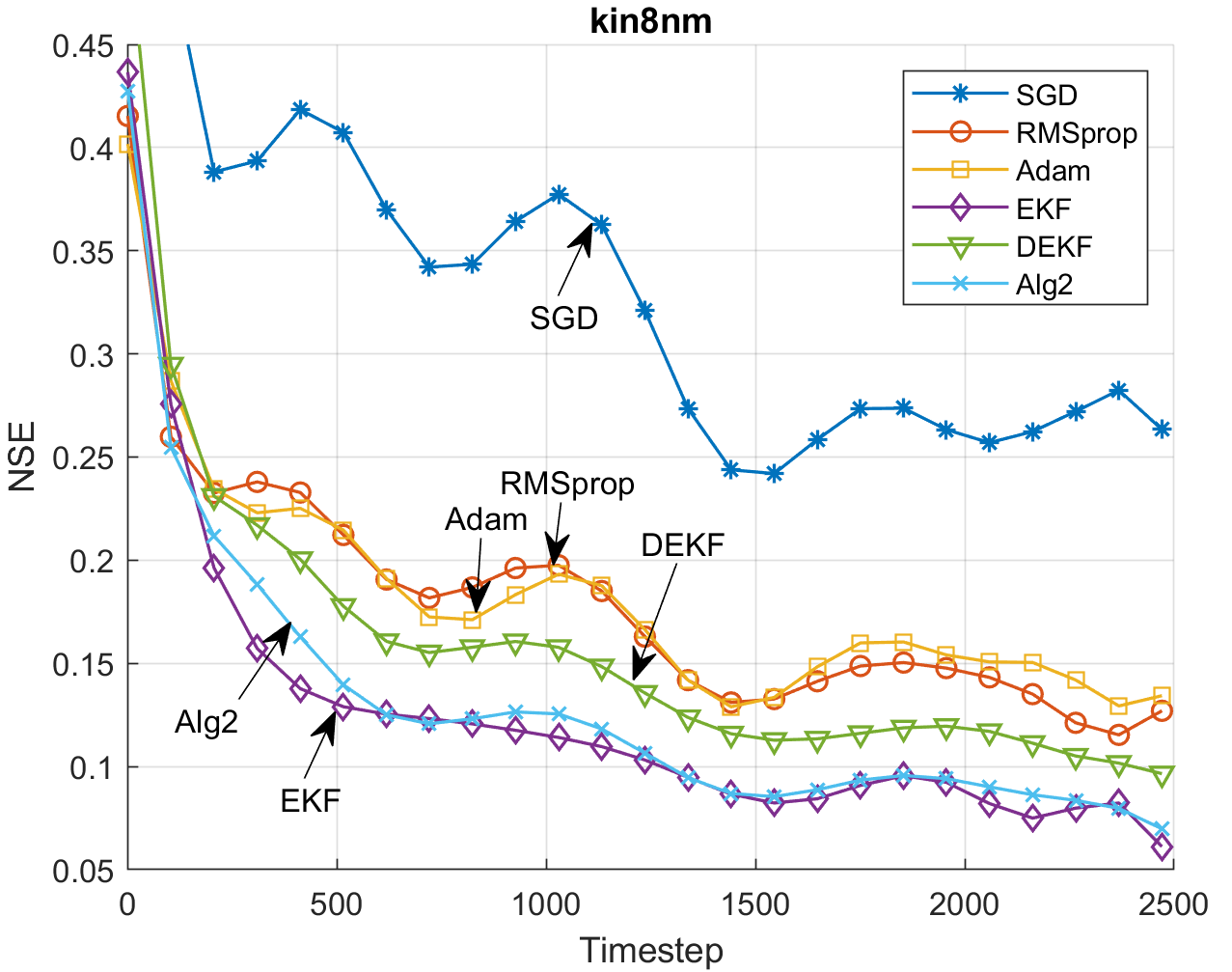}
	\caption{} \label{fig1:r1}
\end{subfigure}
\begin{subfigure}[t]{0.5\textwidth}
    \centering
	\includegraphics[width=0.95\textwidth]{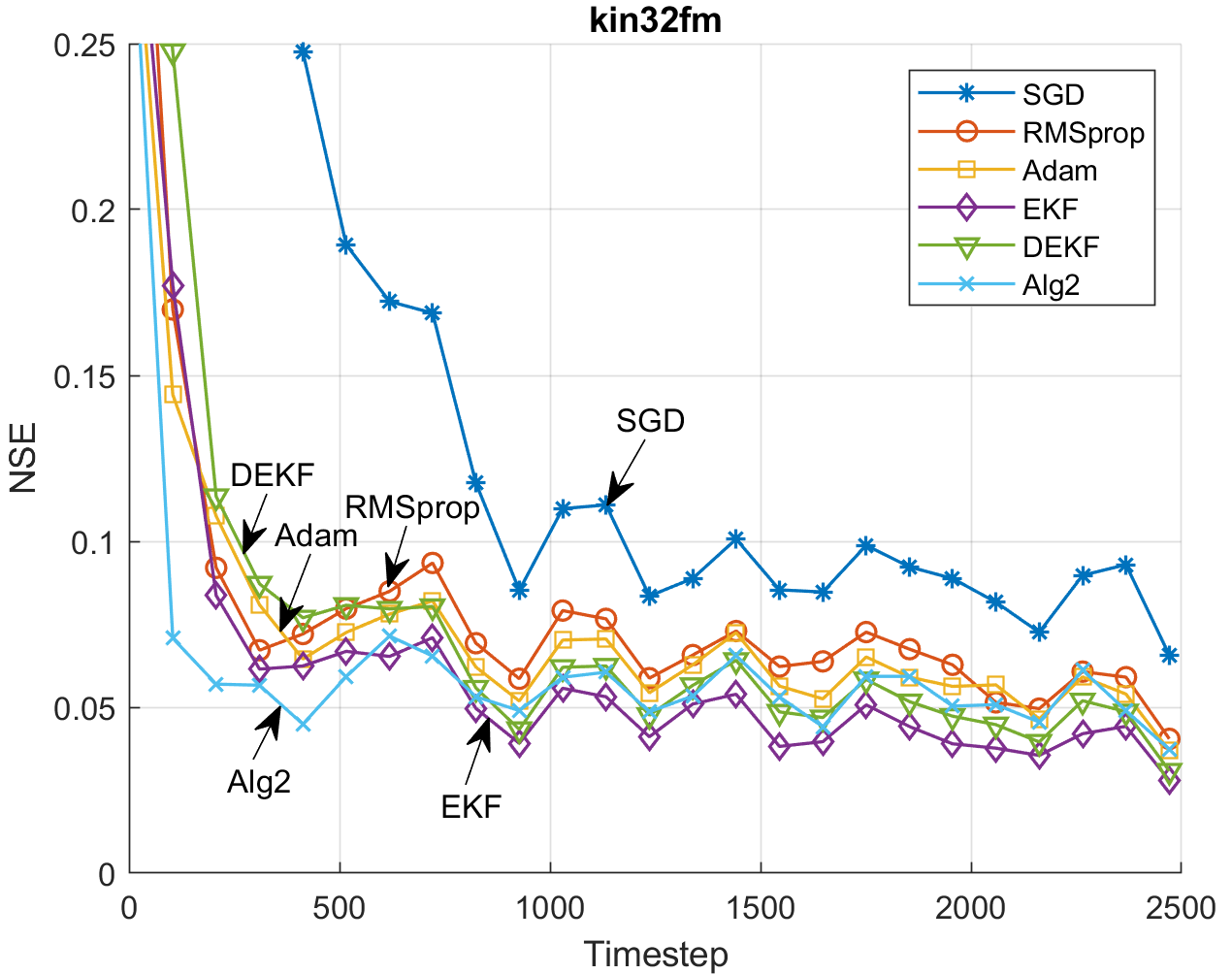}
	\caption{} \label{fig1:r2}
\end{subfigure}  
\caption{Sequential prediction performances of the algorithms on the (a) kin8nm, (b) kin32fm datasets.}\label{fig:1}
\end{figure*}

In this section, we illustrate the performance improvements of our algorithm, i.e., Algorithm \ref{alg:alg2}, with respect to the state-of-the-art. To that end, we compare Algorithm \ref{alg:alg2} (abbreviated as Alg2) with five widely used optimization algorithms, i.e., SGD, Adam, RMSprop, DEKF, and EKF, on six real-world and three synthetic datasets. 

In all simulations, we randomly draw the initial weights from a Gaussian distribution with zero mean and standard deviation of $0.1$, i.e., $\Thb \sim \mathcal{N}(\textbf{0},0.01\I)$. We set the initial values of all state variables to $0$ and $\Xo_\textrm{min}$ to $0.01$. To satisfy the condition in Theorem \ref{th:main2}, we linearly map the target vectors between $[-1,1]$. 

We report results with $90\%$ confidence intervals. To obtain the confidence intervals, we repeat the experiments 20 times with randomly chosen initial parameters (generated with fixed and different seeds), and calculate the $5$th and $95$th percentiles of the errors in each time step. Then, we report the average of the calculated intervals between the calculated percentiles.

We search the hyperparameters of the learning algorithms (except EKF) over a dense grid. Since the grid search for EKF takes a very long time, we re-use the same hyperparameters specified for DEKF. Due to its large overall run-time, we only consider the first $1000$ input/output pairs of the datasets in the grid-search. For each hyperparameter in each optimization method, we repeat the learning procedure ten
times with randomly chosen initial parameters (generated with fixed and different seeds), and report the results using the hyperparameters that minimize the mean of the mean squared errors obtained in the ten runs.  

To provide a consistent scale for the results, we normalize the squared errors with the variance of the target stream. To evaluate the generalization performance of the algorithms, we report the confidence intervals of both one-step normalized squared errors (shortly NSE) and $k$-step normalized squared errors (shortly kNSE), where kNSE is  the normalized squared error of the $k$-step ahead forecast.  In the experiments, we use $k = 50$. To visualize the learning performances over time, we plot the learning curves, which illustrates the median NSE obtained by the algorithms across time. We share the source code of our experiments on GitHub at \url{https://github.com/nurimertvural/EfficientEffectiveLSTM}.

\subsection{Real Data Benchmark}

\begin{figure*}[t]
\begin{subfigure}[t]{0.5\textwidth}
    \centering
	\includegraphics[width=0.95\textwidth]{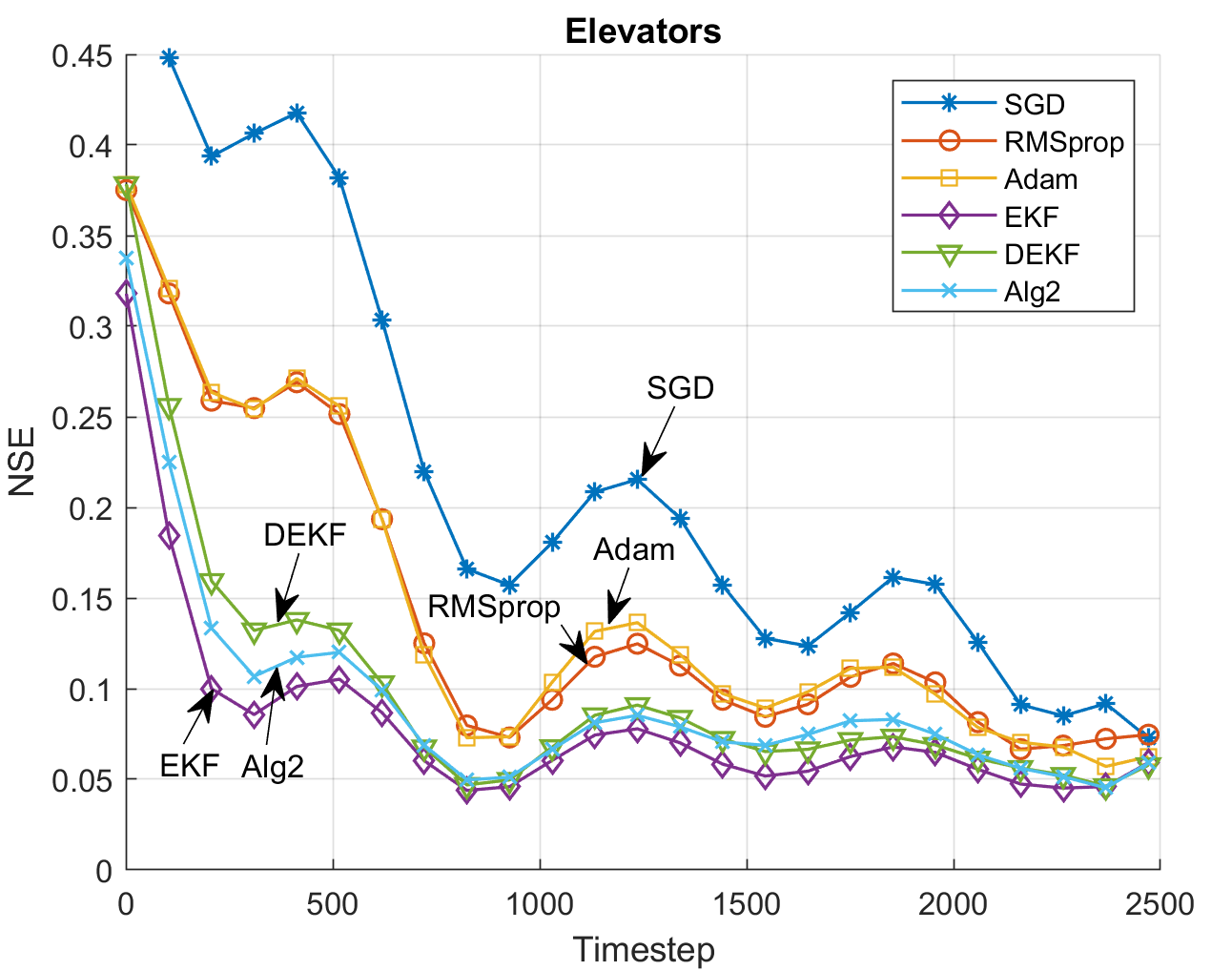}
	\caption{} \label{fig2:r1}
\end{subfigure}
\begin{subfigure}[t]{0.5\textwidth}
    \centering
	\includegraphics[width=0.95\textwidth]{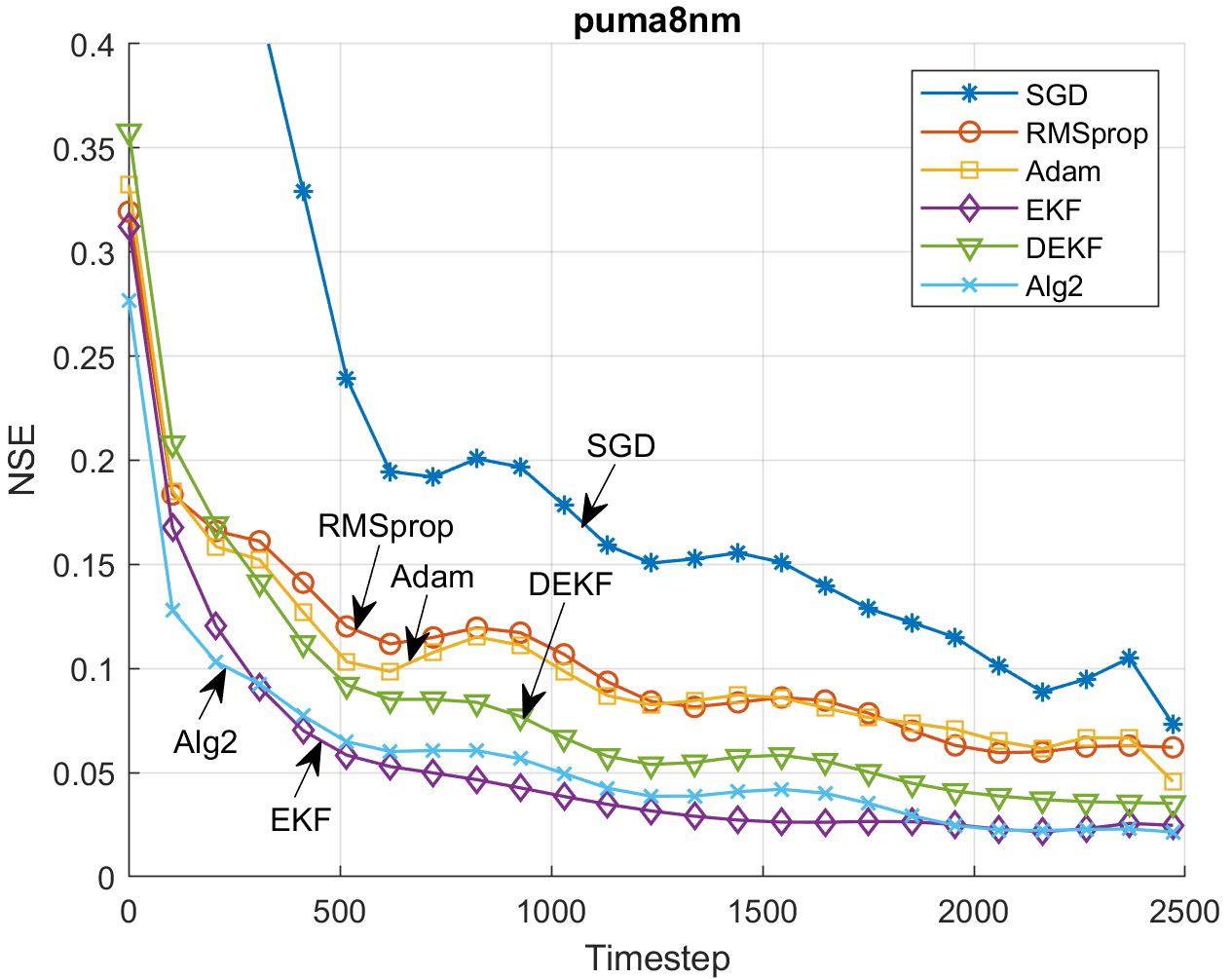}
	\caption{} \label{fig2:r2}
\end{subfigure}
\begin{subfigure}[t]{0.5\textwidth}
    \centering
	\includegraphics[width=0.95\textwidth]{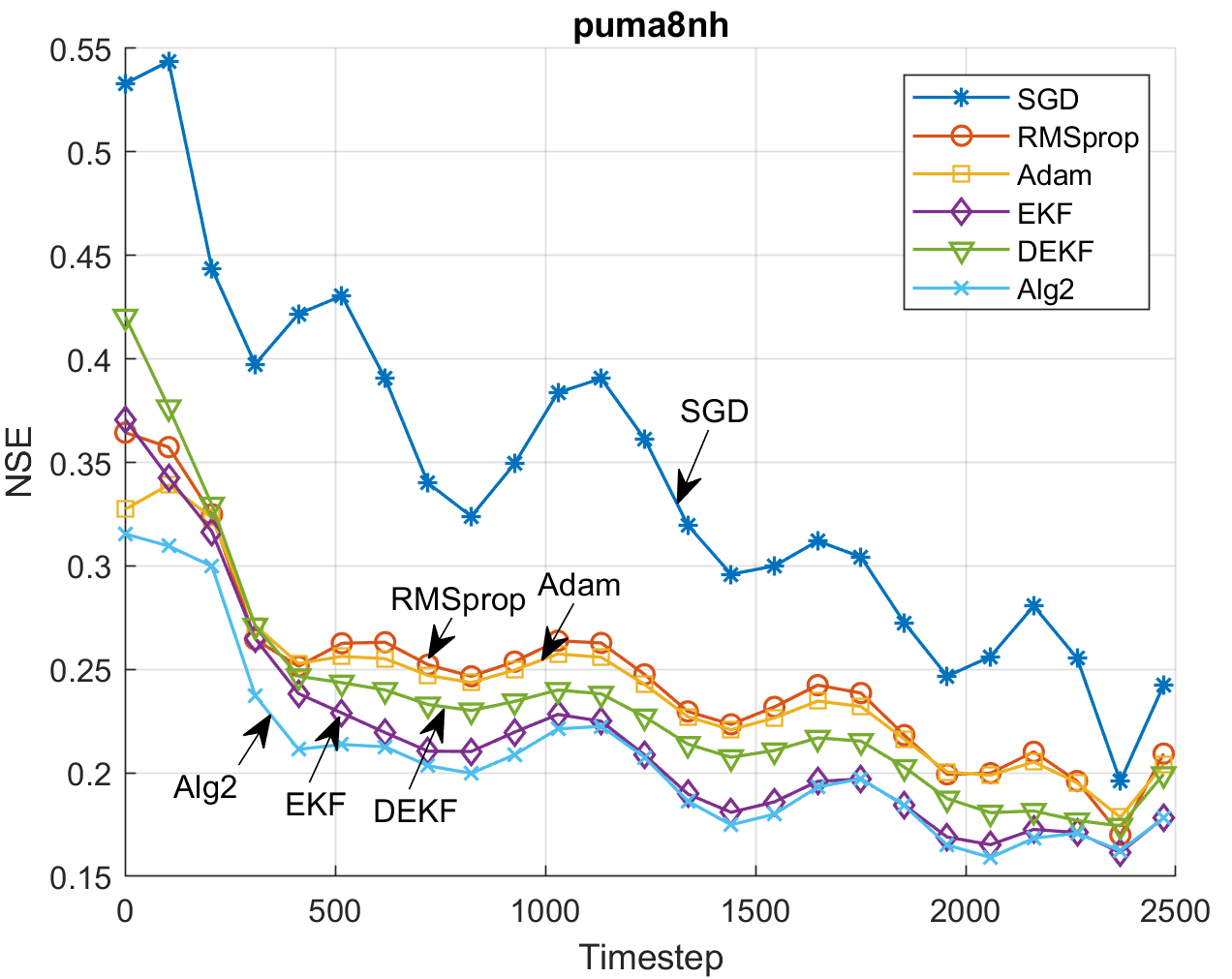}
	\caption{} \label{fig2:r3}
\end{subfigure}
\begin{subfigure}[t]{0.5\textwidth}
    \centering
	\includegraphics[width=0.95\textwidth]{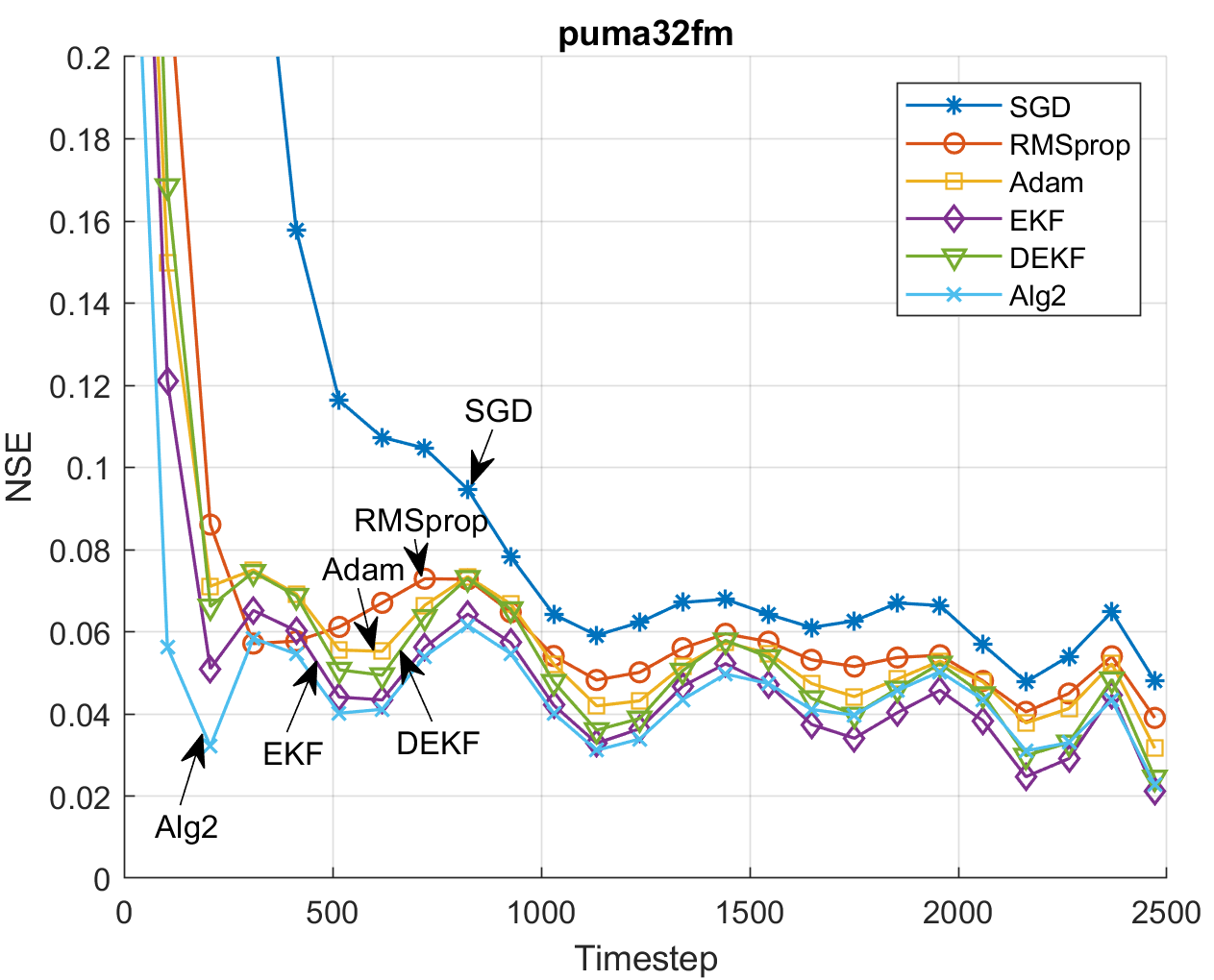}
    \caption{} \label{fig2:r4}
\end{subfigure}
\caption{Sequential prediction performances of the algorithms on the (a) elevators, (b) puma8nm, (c) puma8nh, and (d) puma32fm datasets.}\label{fig:2}
\end{figure*}

\begin{table*}[t]

\begin{subtable}{\linewidth}
\centering
\begin{tabular}{@{}|c|c|c|c|c|c|c|c|@{}}
\toprule
\multicolumn{2}{|c|}{RMSprop} & \multicolumn{2}{c|}{DEKF} 
& \multicolumn{2}{c|}{EKF}    & \multicolumn{2}{c|}{Alg2 (This work)} \\ \midrule
  Timestep      & Run-time    & Timestep      & Run-time      
& Timestep      & Run-time    & Timestep      & Run-time        \\ \midrule
$43778$         & $15.61$     & $29995$       & $11.24$         
& $\bm{5995}$   & $37.93$     & $6906$        & $\bm{11.21}$    \\ \midrule
$44472$         & $15.74$     & $11464$       & $\bm{5.21}$ 
& $\bm{2651}$   & $17.16$     & $6107$        & $10.26$         \\ \midrule
$22912$         & $\bm{8.47}$ & $25382$       & $9.72$         
& $\bm{5411}$   & $33.74$     & $7065$        & $11.79$         \\ \midrule
$27264$         & $7.46$      & $12242$       & $\bm{5.25}$ 
& $\bm{4205}$   & $26.5$      & $8245$        & $14.48$         \\ \midrule
$21073$         & $\bm{6.69}$ & $19491$       & $7.67$          
& $\bm{3244}$   & $20.65$     & $5510$        & $9.26$          \\ \bottomrule
\end{tabular}
\caption{Adding three binary sequences ($n_h = 12$, $n_x =4$).}\label{tab:2a}
\end{subtable}

\vspace{1mm}
\begin{subtable}{\linewidth}
\centering
\begin{tabular}{@{}|c|c|c|c|c|c|c|c|@{}}
\toprule
  \multicolumn{2}{|c|}{RMSprop} & \multicolumn{2}{c|}{DEKF}   
& \multicolumn{2}{c|}{EKF}      & \multicolumn{2}{c|}{Alg2 (This work)}       \\ \midrule
  Timestep      & Run-time      & Timestep     & Run-time     
& Timestep      & Run-time      & Timestep     & Run-time        \\ \midrule
\multicolumn{2}{|c|}{Failed}    & \multicolumn{2}{c|}{Failed} 
& $\bm{28978}$  & $197.03$      & $31353$      & $\bm{52.05}$    \\ \midrule
\multicolumn{2}{|c|}{Failed}    & \multicolumn{2}{c|}{Failed} 
& $\bm{30578}$  & $207.38$      & $36606$      & $\bm{61.41}$    \\ \midrule
\multicolumn{2}{|c|}{Failed}    & \multicolumn{2}{c|}{Failed} 
& $59835$       & $407.63$      & $\bm{53204}$ & $\bm{88.03}$    \\ \midrule
\multicolumn{2}{|c|}{Failed}    & \multicolumn{2}{c|}{Failed} 
& $50549$       & $343.84$      & $\bm{18083}$ & $\bm{30.99}$    \\ \midrule
\multicolumn{2}{|c|}{Failed}    & \multicolumn{2}{c|}{Failed} 
& $\bm{15871}$  & $107.75$      & $18165$      & $\bm{32.96}$    \\ \bottomrule
\end{tabular}
\caption{Adding four binary sequences ($n_h = 12$, $n_x =5$).}\label{tab:2b}
\end{subtable}

\vspace{1mm}
\begin{subtable}{\linewidth}
\centering
\begin{tabular}{@{}|c|c|c|c|c|c|c|c|@{}}
\toprule
\multicolumn{2}{|c|}{RMSprop}   & \multicolumn{2}{c|}{DEKF}   
& \multicolumn{2}{c|}{EKF}      & \multicolumn{2}{c|}{Alg2 (This work)}        \\ \midrule
  Timestep      & Run-time      & Timestep     & Run-time     
& Timestep      & Run-Time      & Timestep     & Run-Time        \\ \midrule
\multicolumn{2}{|c|}{Failed}    & \multicolumn{2}{c|}{Failed} 
& $\bm{27977}$ & $202.27$       & $33912$      & $\bm{52.34}$    \\ \midrule
\multicolumn{2}{|c|}{Failed}    & \multicolumn{2}{c|}{Failed} 
& $\bm{61709}$ & $477.21$       & $78583$      & $\bm{128.53}$   \\ \midrule
\multicolumn{2}{|c|}{Failed}    & \multicolumn{2}{c|}{Failed} 
& $\bm{54763}$ & $412.44$       & $54777$      & $\bm{85.38}$    \\ \midrule
\multicolumn{2}{|c|}{Failed}    & \multicolumn{2}{c|}{Failed} 
& $\bm{32647}$ & $245.28$       & $93080$ & $\bm{174.21}$   \\ \midrule
\multicolumn{2}{|c|}{Failed}    & \multicolumn{2}{c|}{Failed} 
& $\bm{44492}$ & $334.85$       & $57919$ & $\bm{86.36}$    \\ \bottomrule
\end{tabular}
\caption{Adding five binary sequences ($n_h = 12$, $n_x =6$).}\label{tab:2c}
\end{subtable}
\caption{Timesteps and the run-times (in seconds) required to achieve $500$ subsequent error-free predictions, i.e., sustainable prediction. The experiments are repeated with five different input streams and the results are presented in order. The bold font shows the best result (in terms of both timestep and run-time) for each input stream. The simulations are performed on a computer with i7-7500U processor, 2.7-GHz CPU, and 8-GB RAM.} 
\end{table*}

In the first part, we evaluate the performance of our algorithm with six real-world datasets. Since the EKF simulations require very high running times, we consider only the first $2500$ input/output pairs in each dataset, i.e., $T = 2500$.

\subsubsection{Kinematic Family of Datasets}

In the first set of experiments, we use the kinematic family of datasets~\cite{Delve}, which are obtained through the simulations of eight-link all revolute robotic arms with different forward-dynamics. The aim is to estimate the distance of the end-effector from a target by using the input predictors, i.e., $n_d = 1$. Here, we consider two datasets from the kinematic family: kin8nm, which is generated with nonlinear dynamics and moderate noise, and kin32fm, which is generated with fairly linear dynamics and moderate noise.

For the kin8nm dataset, we use $8$-dimensional input vectors of the dataset with an additional bias dimension, i.e., $n_x = 9$, and $16$-dimensional state vectors, i.e., $n_h = 16$. In Adam, RMSprop, and SGD, we use the learning rates of $0.004$, $0.009$, and $0.2$ In EKF and DEKF, we choose the initial state covariance matrix as $100 \I$ and anneal the measurement and process noise levels from $10$ to $3$, and $10^{-4}$ to $10^{-6}$, respectively. In Alg2, we initialize the state covariance matrices as $10 \I$ and anneal the process noise level from $10^{-7}$ to $10^{-8}$. 

In Fig.~\ref{fig1:r1}, we demonstrate the temporal loss performance of the compared algorithms for the kin8nm dataset. Here, we observe that Alg2 and EKF provide comparable performances while suffering lower loss values than the other algorithms. In the leftmost column of Table \ref{tab:1a}, we present the confidence intervals of the mean error values and run-times of the algorithms. Here, we observe that as consistent with the temporal performances, SGD provides the worst NSE and kNSE performance, followed by RMSprop, Adam, and DEKF. Moreover, Alg2 and EKF provide approximately $20\%$ improvement in NSE and kNSE compared to RMSprop, Adam, and DEKF while providing very similar error intervals. However, Alg2 provides that equivalent performance in approximately $15$ times smaller run-time. 

For the kin32fm dataset, we use $32$-dimensional input vectors of the dataset with an additional bias dimension, i.e., $n_x = 33$, and $12$-dimensional state vectors, i.e., $n_h = 12$. In Adam, RMSprop, and SGD, we use the learning rates of $0.003$, $0.005$ and $0.4$. In EKF and DEKF, we choose the initial state covariance matrix as $50 \I$ and anneal the measurement and process noise levels from $10$ to $3$, and $10^{-4}$ to $10^{-6}$, respectively.. In Alg2, we initialize the state covariance matrices as $0.5 \I$ and anneal the process noise level from $10^{-7}$ to $10^{-8}$. 

In Fig.~\ref{fig1:r2}, we demonstrate the temporal loss performance of the compared algorithms for the kin32fm dataset. Here, we observe that at the beginning of the training, Alg2 converges to smaller error values faster than the other algorithms. In the rest of the training, all the algorithms except SGD provide similar loss performances, which can be attributable to the relatively linear dynamics of the kin32fm dataset. In the middle column of Table \ref{tab:1a}, we present the confidence intervals and the run-times. Here, we see that EKF and Alg2 provide approximately $35\%$ improvement in NSE and kNSE values compared to DEKF, Adam, and RMSprop.   Differing from the previous experiment, Alg2 achieves slightly better NSE and kNSE performance than EKF. Similar to the previous experiment, Alg2 provides very close performance to EKF in $12$ times smaller run-time.

\subsubsection{Elevators Dataset}

In the second part, we consider the elevators dataset, which is obtained from the controlling procedure of an F16 aircraft \cite{Keel}. Here, the aim is to predict the scalar variable that expresses the actions of the aircraft, i.e., $n_d=1$. For this dataset, we use $18$-dimensional input vectors of the dataset with an additional bias dimension, i.e., $n_x = 19$, and $12$-dimensional state vectors, i.e., $n_h = 12$. In Adam, RMSprop, and SGD, we choose the learning rates as $0.003$, $0.006$, and $0.3$. In EKF and DEKF, we  initialize the state covariance matrix as $ 100 \I$ and anneal the measurement and process noise levels from $10$ to $3$, and $10^{-4}$ to $10^{-6}$. In Alg2, we set the initial state covariance matrices to $10 \I$ and anneal the process noise from $10^{-4}$ to $10^{-8}$. 

In Fig.~\ref{fig2:r1}, we demonstrate the temporal loss performance of the compared algorithms for the elevators dataset. Here, we observe that similar to the previous experiments, Alg2 and EKF provide comparable performances, followed by DEKF, RMSprop, Adam, and SGD. In the rightmost column of Table \ref{tab:1a}, we present the experiment results for the elevators dataset. Here, we see that, as in the previous two experiments, Adam and RMSprop obtain smaller NSE and kNSE values compared to SGD. Furthermore, EKF-based methods, i.e., DEKF, EKF, and Alg2, improve the accuracy of RMSprop and Adam by approximately $30\%$. Here also, EKF and Alg2 achieve the smallest errors while Alg2 reduces the training time of EKF $13$ times without compromising the error performance significantly.

\subsubsection{Pumadyn Family of Datasets}

In the last set of experiments, we use the pumadyn family of datasets obtained through realistic simulations of the dynamics of a Puma $560$ robot arm~\cite{Delve}. The aim is to estimate the angular acceleration of the arm, i.e., $n_d=1$, by using the angular position and angular velocity of the links. In this part, we consider three datasets from the pumadyn family: puma8nm, which is generated with nonlinear dynamics and moderate noise, puma8nh, which is generated with nonlinear dynamics and high noise, and puma32fm, which is generated with fairly linear dynamics and moderate noise

For the puma8nm dataset, we use $8$-dimensional input vectors of the dataset with an additional bias dimension, i.e., $n_x = 9$, and $16$-dimensional state vectors, i.e., $n_h = 16$. In Adam, RMSprop, and SGD, we use the learning rates of $0.006$, $0.01$ and $0.4$. In EKF and DEKF, we initialize the state covariance matrix as $100 \I$ and anneal the measurement and process noise levels from $10$ to $3$, and $10^{-4}$ to $10^{-6}$, respectively. In Alg2, we set the initial state covariance matrices to $50 \I$ and anneal the process noise level from $10^{-4}$ to $10^{-8}$. 

In the puma8nh experiment, we left the dimension of the state and input vectors unchanged. Here, we choose the learning rates of Adam, RMSprop and SGD as $0.006$, $0.008$, and $0.2$. In EKF and DEKF, we initialize the state covariance matrix as $25 \I$ and anneal the measurement and process noise levels from $10$ to $3$, and $10^{-4}$ to $10^{-6}$, respectively. In Alg2, we set the initial state covariance matrices to $10 \I$ and anneal the process noise level from $10^{-7}$ to $10^{-8}$. 

For the puma32fm dataset, we use $32$-dimensional input vectors of the dataset with an additional bias dimension, i.e., $n_x = 33$, and $12$-dimensional state vectors, i.e., $n_h = 12$. In Adam, RMSprop, and SGD, we use the learning rates of $0.003$, $0.004$, and $0.4$. In EKF and DEKF, we choose the initial state covariance matrix as $50 \I$ and anneal the measurement noise and process noise levels from $10$ to $3$, and $10^{-4}$ to $10^{-6}$, respectively. In Alg2, we initialize the state covariance matrices as $0.5 \I$ and  anneal the process noise from $10^{-7}$ to $10^{-8}$.

We plot the learning curves for the puma8nm, puma8nh, and puma32fm datasets in Figures \ref{fig2:r2}, \ref{fig2:r3}, and \ref{fig2:r4}. In the figures, we observe that in all three experiments, EKF and Alg2 provide similar performances while outperforming the other compared algorithms. In Table \ref{tab:1b}, we present the confidence intervals and the run-times. Here, we observe that the error values in the puma8nh experiment are relatively higher due to the nonlinear dynamics and high noise of the puma8nh dataset, whereas the error values in the puma32fm experiment are comparably smaller due to the linearity and moderate noise of the puma32fm dataset. As in the previous parts, Alg2 and EKF provide $10$ to $45\%$ improvements in NSE and kNSE values compared to the other algorithms in all three experiments. However, Alg2 provides the comparable error values in $12$ to $15$ times smaller running times than EKF.

\subsection{Binary Addition}

In this part, we compare the performance of the algorithms on a synthesized dataset that requires learning long-term dependencies. We show that our algorithm learns the long-term dependencies comparably well with EKF, which explains its success in the previous experiments.

To compare the algorithms, we construct a synthesized experiment in which we can control the length of temporal dependence. To this end, we train the network to learn adding $n$ number of binary sequences, where the carry bit is the temporal dependency that the models need to learn. We note that the number of added sequences, i.e., $n$, controls how long the carry bit is propagated on average, hence, the average length of the temporal dependence. We learn binary addition with a purely online approach, i.e., there is only one input stream, and learning continues even when the network makes a mistake.

In the experiments, we consider $n \in \{3, 4, 5\}$. For each $n$, we repeat the experiments with five different input streams, where all input bits are generated randomly by using independent Bernoulli random trials with success probability $0.5$. We assume that the network decides $1$ when its output is positive, and $0$ in vice versa. For the performance comparison, we count the number of symbols needed to attain error-free predictions for $500$ subsequent symbols, i.e., sustainable prediction. We note that as the algorithms demonstrate a faster rate of convergence, the number of steps required to obtain sustainable prediction is expected to be smaller.

In this part, we use $12$-dimensional state vectors, i.e., $n_h = 12$. As the input vectors, we use the incoming $n$ bits with an additional bias, i.e., $n_x = n + 1$.  In EKF and DEKF, we initialize the state covariance matrix as $100 \I$, choose the measurement noise level as $r_t = 3$ for all $t \in [T]$, and anneal the process noise level from $10^{-3}$ to $10^{-6}$. In Alg2, we set all initial state covariance matrices to $10 \I$ and choose the process noise as $q_t = 10^{-7}$ for all $t \in [T]$. In RMSprop, we use the learning rate of $0.02$. Due to space constraints, we compare the EKF-based methods only with RMSprop, which is observed to demonstrate a faster rate of convergence compared to Adam and SGD during the experiments. In the following, we say that an algorithm is failed if it is unable to obtain sustainable prediction after $10^5$ timesteps.

In the first experiment, we consider adding three sequences, i.e., $n = 3$. The experiment results are presented in Table \ref{tab:2a}. Here, we see that Alg2 and EKF achieve $500$ subsequent error-free predictions with considerably fewer data compared to RMSprop and DEKF. On the other hand, RMSprop and DEKF achieve sustainable prediction with shorter running times in general due to their relatively small computational complexity.

In the second and third experiments, we consider adding four and five sequences, i.e., $n = 4$ and $n = 5$, respectively. The results are presented in Table \ref{tab:2b} and Table \ref{tab:2c}. Here, we see that in all experiments, RMSprop and DEKF are unable to obtain subsequent $500$ error-free predictions even after $10^5$ timesteps. On the other hand, EKF and Alg2 consistently achieve the sustainable prediction parallel to their success in the previous experiments. Moreover, we see that EKF requires slightly fewer data to complete the task in general. However, Alg2 requires significantly shorter running times due to its relative efficiency in comparison to EKF.

\section{Conclusion}\label{sec:concl}
We introduce an efficient EKF-based second-order training algorithm for adaptive nonlinear regression using LSTM neural networks. Our algorithm can be used in a broad range of adaptive signal processing applications since it does not assume any underlying data generating process or future information, except for the target sequence being bounded~\cite{lstmdekf, Coskun17}.

We construct our algorithm on a theoretical basis. We first model the LSTM-based adaptive regression problem with a state-space model to derive the update equations of EKF and IEKF. Next, we derive an adaptive hyperparameter selection strategy for IEKF that guarantees errors to converge to a small interval under the assumption that all the data-dependent parameters are known a priori. Finally, we extend our algorithm to a truly online form, where the data-dependent parameters are learned by sequentially observing the data-sequence.

Through an extensive set of experiments, we demonstrate significant performance improvements of our algorithm with respect to the state-of-the-art training methods. To be specific, we show that our algorithm provides a considerable improvement in accuracy with respect to the widely-used adaptive methods Adam~\cite{King14}, RMSprop~\cite{Tieleman12}, and DEKF~\cite{Pusk94}, and very close performance to EKF~\cite{Haykin} with a $10$ to $15$ times reduction in the training time.

As a future work, we consider utilizing our algorithm on other classes of problems, such as sequence to sequence learning or generative models. Another possible research direction is to explore the performance of our algorithm with different sequential learning models, such as Nonlinear Autoregressive Exogenous Models or Hidden Markov Models.

\appendices
\section{} \label{sec:appa}
\begin{proof}[\textbf{Proof of Lemma \ref{lem:statements}}]
In the following, we manipulate the IEKF update rules in (\ref{iekf_1})-(\ref{k_gain}) to obtain the statements in Lemma \ref{lem:statements}. We note that since Algorithm \ref{alg:alg1} performs the IEKF updates if $\normet^2 > 4 \Xo^2$, it guarantees the following statements for  $\{ t : \normet^2 > 4 \Xo^2 \}$.
\begin{enumerate}[wide, labelwidth=!, labelindent=0pt]
\item By multiplying both sides of (\ref{iekf_1}) with $-1$, we write:
\begin{equation}
\label{lem1:11}
-\Thitp =  - \Thit - \Kit (\dt - \dht).
\end{equation}
Then by using (\ref{lstm_ss}), we add $\Titp$ and $\Tit$ to both sides of (\ref{lem1:11}) respectively:
\begin{equation}
\Titp -\Thitp = (\Tit - \Thit) - \Kit (\dt - \dht).
\end{equation}
By using the Taylor series expansion in (\ref{eq:taylor2}) and using the notation $\zit= (\Tit - \Thit)$, we write
\begin{align}
\label{eq:ed3}
\zitp = \zit - \Kit \Hit \zit - \Kit \Xit.
\end{align}
The statements in (\ref{eq:ed1}) and (\ref{eq:ed2}) follow (\ref{eq:ed3}).
\item  By (\ref{ind:cov_upd}), for all $i \in [(4 n_s+n_d)]$,
\begin{align}
\Pitp&- q_t \I = ( \I - \Kit \Hit) \Pit \label{lem1:21}\\
&= \big(  \I - \Pit \Hit^T ( \Hit \Pit \Hit^T + r_{i,t} \I  )^{-1} \Hit \big) \Pit \label{lem1:22} \\
&=  \Pit - \Pit \Hit^T ( \Hit \Pit \Hit^T + r_{i,t} \I )^{-1} \Hit \Pit \label{lem1:23}
\end{align}
where we use the formulation of $\Kit$ in (\ref{k_gain}) to write (\ref{lem1:21}) from (\ref{lem1:22}). By applying the matrix inversion lemma\footnote{Matrix inversion lemma: $\Matrixinversionlemma$.} to (\ref{lem1:23}), we write
\begin{equation}
\label{lem1:24}
(\Pitp - q_t \I )^{-1}= \Pit^{-1} +  r_{i,t}^{-1} \Hit^T  \Hit.
\end{equation}
By noting that $\textbf{P}_{i,1}^{-1} > \textbf{0}$, and using (\ref{lem1:24}) as the induction hypothesis, it can be shown that $(\Pitp - q_t \I )^{-1}$ exists and  $(\Pitp - q_t \I )^{-1}>\textbf{0}$, for all $t \in [T]$. Since $q_t \geq 0$,  $\Pit^{-1}$ has the same properties, which leads to (\ref{eq:pthi2}). Also,  (\ref{eq:pthi1}) can be reached by taking the inverse of  both sides in (\ref{lem1:21}).
\item By multiplying both sides of (\ref{eq:ed1}) with $(\Pitp - q_t \I )^{-1}$, and using (\ref{eq:pthi1}), (\ref{eq:pt_ed}) can be obtained.
\end{enumerate}
\end{proof}

\begin{proof}[\textbf{Proof of Thorem \ref{th:main}}]
To prove Theorem \ref{th:main}, we use the second method of Lyapunov. Let us fix an arbitrary node $i \in [(4n_s+n_d)]$, and choose the Lyapunov function as
\begin{equation}
V_{i,t}= \zit^T \Piti \zit.
\end{equation}
Let us say that $\Delvt= V_{i,t+1}-V_{i,t}$. Since we update $\Pit$, and $\Thit$ only when $\normet^2 > 4 \Xo^2$, for  $\{ t : \normet^2 \leq 4 \Xo^2 \}$, $\Delvt=0$. Therefore in the following, we only consider the time steps, where we  perform the weight update, i.e, $\{ t : \normet^2 > 4 \Xo^2 \}$. 	

To begin with, we write the open formula of $\Delvt$:
{\small
\begin{align}
&\Delvt= \zitp^T \Pitpi \zitp - \zit^T \Piti \zit  \\
&\leq \zitp^T (\Pitp - q_t \I )^{-1} \zitp - \zit^T \Piti \zit  \label{eq:last0125} \\
&= \zitp^T \Piti \zit  - \zitp^T  (\Pitp - q_t \I )^{-1} \Kit \Xit  - \zit^T \Piti \zit \label{eq:last025} \\
&= (\zitp-\zit)^T \Piti \zit - \zitp^T  (\Pitp - q_t \I )^{-1} \Kit \Xit  \\
&= (- \Kit \Hit \zit - \Kit \Xit)^T \Piti \zit \nonumber \\ 
&\quad - \zitp^T (\Pitp - q_t \I )^{-1} \Kit \Xit  \label{eq:last05} \\
&= - \zit^T \Hit^T \Kit^T \Piti \zit - \Xit^T \Kit^T \Piti \zit \nonumber \\
&\quad - \zitp^T  (\Pitp - q_t \I )^{-1} \Kit \Xit, \label{eq:last1}
\end{align} }%
where we use the 2nd statement in Lemma \ref{lem:statements} for (\ref{eq:last0125}), (\ref{eq:pt_ed}) for (\ref{eq:last025}), and (\ref{eq:ed2}) for (\ref{eq:last05}). For the sake of notational simplicity, we introduce $\Mit=\Mitf$, where $\Kit= \Pit \Hit^T \Miti$. Then, we write (\ref{eq:last1}) as
\begin{align}
\Delvt &\leq - \zit^T \Hit^T \Miti \Hit \zit  - \Xit^T \Miti \Hit \zit \nonumber \\
&\quad - \zitp^T (\Pitp - q_t \I )^{-1} \Kit \Xit. \label{eq:last2}
\end{align}
We write the last term in (\ref{eq:last2}) as
{\small
\begin{align}
- &\zitp^T (\Pitp - q_t \I )^{-1} \Kit \Xit \nonumber \\
&= - \Xit^T \Kit^T (\Pitp - q_t \I )^{-1} \zitp   \\
&= - \Xit^T \Kit^T \Big( \Piti \zit - (\Pitp - q_t \I )^{-1} \Kit \Xit \Big) \label{eq:last25}   \\
&=  - \Xit^T \Kit^T \Piti \zit + \Xit^T \Kit^T  (\Pitp - q_t \I )^{-1} \Kit \Xit \\
&= - \Xit^T \Miti \Hit \zit + \Xit^T \Kit^T  (\Pitp - q_t \I )^{-1} \Kit \Xit, \label{eq:last3}
\end{align} }%
where we use (\ref{eq:pt_ed}) to obtain (\ref{eq:last25}). By (\ref{eq:last3}) in (\ref{eq:last2}), we write
\begin{align}
\Delvt \leq &- \zit^T \Hit^T \Miti \Hit \zit - 2  \Xit^T \Miti \Hit \zit  \nonumber  \\
& + \Xit^T \Kit^T  (\Pitp - q_t \I )^{-1} \Kit \Xit. \label{eq:last4}
\end{align}
We add  $\pm \Xit^T \Miti \Xit$ to (\ref{eq:last4}), and group the terms as
\begin{align}
\Delvt \leq  & \Xit^T \Big( \Kit^T (\Pitp - q_t \I )^{-1} \Kit + \Miti \Big) \Xit  \nonumber \\
 &-(\Hit \zit + \Xit)^T \Miti (\Hit \zit + \Xit). \label{eq:last5}
\end{align}
By using (\ref{eq:pthi2}), the definition of $\et$ in (\ref{eq:taylor2}), and formulation of $\Kit$, we write (\ref{eq:last5}) as
{\small 
\begin{align}
\Delvt &\leq   \Xit^T \Big( \Miti \Hit \Pit (\Piti +  r_{i,t}^{-1} \Hit^T \Hit) \Pit \Hit^T \Miti  \nonumber\\ 
&\qquad \qquad + \Miti \Big) \Xit - \et^T \Miti \et  \nonumber  \\
&=\Xit^T \Big(  r_{i,t}^{-1} \Miti \Hit  \Pit \Hit^T \Hit \Pit \Hit^T \Miti \nonumber \\
&\qquad \qquad   + \Miti \Hit \Pit \Hit^T \Miti    + \Miti \Big) \Xit- \et^T \Miti \et. \label{eq:last6}
\end{align} }
Note that since $\Mit^{-1} = \Mitf^{-1}$,  $\Miti \leq r_{i,t}^{-1} \I $, and $\Hit \Pit \Hit^T \Miti \leq \I$. By using these two inequalities in (\ref{eq:last6}), we get
\begin{equation}
\Delvt \leq  \frac{3 \norm{\Xit}^2}{r_{i,t}} - \frac{\norm{\et}^2}{\max_{j \in [n_\theta]} \lambda_j(\Mit)}, \label{eq:last7}
\end{equation}
where $\lambda_j(\Mit)$ denotes the $\textrm{j}^\textrm{th}$ eigenvalue of $\Mit$. We note that  $\Tr(\Mit)/n_d \leq \max_{j \in [n_\theta]} \lambda_j(\Mit)$, and $\Tr(\Mit)= \Tr(\Hit \Pit \Hit^T) + n_d r_{i,t}$. Then, by using $\norm{\Xit}^2 \leq \Xo^2$ and  (\ref{eq:last7}), we write
\begin{equation}
\Delvt \leq  \frac{ 3 \Xo^2}{r_{i,t}} - \frac{n_d \norm{\et}^2}{\Tr(\Hit \Pit \Hit^T) + n_d r_{i,t}}. \label{eq:last8}
\end{equation}
To ensure stability, we need $\Delvt < 0$. For this, it is sufficient to guarantee that the right hand side of (\ref{eq:last8}) is smaller than $0$, i.e.,
\begin{equation}
\Delvt \leq  \frac{ 3 \Xo^2}{r_{i,t}} - \frac{n_d \norm{\et}^2}{\Tr(\Hit \Pit \Hit^T) + n_d r_{i,t}} <0. \label{eq:last85}
\end{equation}
To guarantee (\ref{eq:last85}), we need to ensure
\begin{equation}
\frac{3 \Tr(\Hit \Pit \Hit^T) \Xo^2 / n_d}{\norm{\et}^2- 3 \Xo^2} < r_{i,t}. \label{eq:last875}
\end{equation}
Since we only consider $\{ t : \normet^2 > 4 \Xo^2 \}$, we can bound  the left hand side of (\ref{eq:last875}) as
\begin{align}
\frac{3 \Tr(\Hit \Pit \Hit^T) \Xo^2 / n_d}{\norm{\et}^2- 3 \Xo^2} &< \frac{3 \Tr(\Hit \Pit \Hit^T) \Xo^2 / n_d}{4 \Xo^2- 3 \Xo^2}. \\
&= 3 \Tr(\Hit \Pit \Hit^T) / n_d \label{eq:last9}
\end{align}
Therefore $r_{i,t}=3 \Tr(\Hit \Pit \Hit^T) / n_d$ ensures stability. 

As we ensure $\Delvt < 0$ for $\{ t : \normet^2 > 4 \Xo^2 \}$  and $\Delvt = 0$ for $\{ t : \normet^2 \leq 4 \Xo^2 \}$ ,  Algorithm \ref{th:main} guarantees
\begin{equation}
\label{eq:last10}
\zit^T \Piti \zit \leq  \Vb =  p_1^{-1} \norm{\zib}^2,
\end{equation}
for all $t \in [T]$. Since $\norm{\zib}$ is finite, as a result of (\ref{eq:last10}), $\zit^T \Piti \zit$ is also finite. Under the condition that $\Tr(\Pit)$ stays bounded, $\norm{\zit}$ should stay bounded, which proves the 1st statement of Theorem \ref{th:main}.  Moreover, since $\Delvt < 0$ for  $\{ t : \normet^2 > 4 \Xo^2 \}$, as the cardinality of $\{ t : \normet^2 > 4 \Xo^2 \}$ approaches to infinity, $\zit^T \Piti \zit$ approaches to $0$, i.e., $\zit^T \Piti \zit \to 0$. If $\Tr(\Pit)$ stays bounded, $\zit^T \Piti \zit \to 0$ implies that  $\norm{\zit} \to 0$.  Since we learn the LSTM parameters $\Tht$ only  when $\{ t : \normet^2 > 4 \Xo^2 \}$, by the desired data model in (\ref{lstm_ss})-(\ref{lstm_ss1}), Algorithm \ref{alg:alg1} converges to the weights, which guarantees a loss value lower than or equal to $4 \Xo^2$. This proves the 2nd statement of Theorem \ref{th:main}.
\end{proof}

\begin{proof}[\textbf{Proof of Thorem \ref{th:bound}}]
To prove the statement in the theorem, we construct the scenario that maximizes $\Xo$ for $\Thb \approx \textbf{0}$ and show that $\Xo \in [0, \sqrt{n_d}]$ in this scenario. We construct this scenario in three steps:
\begin{enumerate}[leftmargin=*]
\item  Due to our LSTM model in (\ref{lstm_1})-(\ref{lstm_7}),  as $\norm{\Tht}$ approaches to $0$,  the entries of $\dht$ and $\Ht$ approaches to $0$ as well, i.e.,  as $\norm{\Tht} \to 0$,  $\dht \to \textbf{0}$ and $\Ht \to \textbf{0}$. Hence, by (\ref{eq:taylor2}), as  $\norm{\Tht} \to 0$, $\Xit \to \et$ for all $t \in [T]$ and $i \in [( 4 n_s + n_d)]$.
\item  Let us assume $\Xo \leq \sqrt{n_d}$, which we will validate in the following, and use $\Xo= \sqrt{n_d}$. Since $\Xo = \sqrt{n_d}$ does not allow any weight update, $\norm{\Thb} \leq \epsilon $ and $\Xo =\sqrt{ n_d}$ result in $\norm{\Tht} \leq \epsilon$ for all $t \in [T]$, where $\epsilon$ is a small positive number. 
\item By the 1st step, $\textbf{d}_t= \pm \textbf{1}$ maximizes $\Xo$ in the case of $\norm{\Tht} \leq \epsilon$. By the 2nd step, if $\norm{\Thb} \leq \epsilon$ and $\Xo= \sqrt{n_d}$, then $\norm{\Tht} \leq \epsilon$. This implies that as $\norm{\Thb} \to 0$, $\norm{\Xit} \to \norm{\et}= \sqrt{n_d}$. Thus, our assumption in the 2nd step holds at some point for any bounded data sequence. 
Therefore, we can choose sufficiently small initial weights to ensure $\Xo \in [0,\sqrt{n_d}]$.
\end{enumerate} 
\end{proof}
\section{} \label{sec:appb}
For the proof, we use the notion of exp-concavity, which is defined in \cite[Definition 2]{volkanexp}. Note that by using \cite[Definition 2]{volkanexp}, we can show that $F(\dht)= \norm{\dt - \dht}^2$, where $\dt,\dht \in [-1,1]^{n_d}$, is $\frac{1}{8 n_d}$-exp-concave. 
\begin{proof}[\textbf{Proof of Thorem \ref{th:main2}}]
We note that since we assume $\Xo_\textrm{best} \in [\Xo_\textrm{min}, \sqrt{n_d}]$, there is an Algorithm \ref{alg:alg1} instance in Algorithm \ref{alg:alg2}, which uses  $\Xo \in [\Xo_\textrm{best}, 2 \Xo_\textrm{best}]$. Let us use  $\tilde{j}$ for the index of that instance, and $\ell_{j,t}$ to denote the loss of the $\textrm{j}^\textrm{th}$ Algorithm \ref{alg:alg1} instance at time $t$, i.e, $\ell_{j,t}= \norm{\textbf{d}_t-\bm{\hat{d}}_{j,t}}^2$. Let us also use  $\ell_{A,t}$ to denote the loss of Algorithm \ref{alg:alg2} at time $t$, where  $\ell_{A,t}= \norm{\dt-\dht}^2$.  In this proof, our aim is to show that $\lim_{t \to \infty} \ell_{A,t} - \ell_{\tilde{j},t}=0$. For the proof, we introduce three shorthand notations:
\begin{equation*}
W_t= \sum_{j=1}^N w_{j,t}, \qquad L_{j,T}= \sum_{t =1 }^T \ell_{j,t}, \qquad  L_{A,T}= \sum_{t =1 }^T \ell_{A,t}. 
\end{equation*}
To begin with, we find a lower bound for  $\ln (W_{T+1}/ W_1)$:
\begin{align}
\ln \frac{W_{T+1}}{W_1} &= \ln \Big( \sum_{j=1}^N \exp(- \frac{1}{8 n_d} L_{j,T}) \Big) - \ln N \\
&\geq  - \frac{1}{8 n_d}  L_{\tilde{j},T} - \ln N. \label{p2:last1}
\end{align}
Then, we find an upper bound for $\ln (W_{t+1}/W_t)$:
\begin{align}
\ln \frac{W_{t+1}}{W_t} &= \ln \Big( \frac{\sum_{j=1}^N w_{j,t} \exp( - \frac{1}{8 n_d} \norm{\dt - \dhjt}^2)}{\sum_{k=1}^N w_{k,t}} \Big) \\
&\leq \ln \Big( \exp(  - \frac{1}{8 n_d} \norm{\dt - \frac{\sum_{j=1}^N w_{j,t} \dhjt}{\sum_{k=1}^N w_{k,t}}}^2 ) \Big) \label{p2:last15} \\
&= - \frac{1}{8 n_d}  \norm{\dt-\dht}^2  = -  \frac{1}{8 n_d}  \ell_{A,t}, \label{p2:last2}
\end{align}
where we use the exp-concavity of $F(\dht)= \norm{\dt - \dht}^2$ and Jensen's inequality for (\ref{p2:last15}). Since $\sum_{t=1}^T \ln \frac{W_{t+1}}{W_t}= \ln \frac{W_{T+1}}{W_1}$, by using (\ref{p2:last1}) and (\ref{p2:last2}), we write
\begin{align}
 - \frac{1}{8 n_d}  L_{\tilde{j},T} - \ln N \leq - \frac{1}{8 n_d}\sum_{t=1}^T \ell_{A,t},
\end{align}
which can be equivalently written as 
\begin{align}
L_{A,T} -  L_{\tilde{j},T}  \leq 8 n_d \ln N. \label{p2:last3}
\end{align}
Note that by (\ref{eq:last3}) the series $L_{A,T} -  L_{\tilde{j},T}$  is convergent, which means $\lim_{t} \ell_{A,t} - \ell_{\tilde{j},t} = 0$. Since the Algorithm \ref{alg:alg1} instance with index $\tilde{j}$ has $\Xo \in [\Xo_\textrm{best}, 2 \Xo_\textrm{best}]$, by Theorem \ref{th:main}, the loss sequence of Algorithm \ref{alg:alg2} converges to $[0, 16 \Xo_\textrm{best}^2]$.

\end{proof}

\bibliographystyle{IEEEtran}
\balance
\bibliography{my_references}

\end{document}